\documentclass{article}

\usepackage{microtype}
\usepackage{graphicx}
\usepackage{subcaption}
\usepackage{booktabs} % for professional tables

\usepackage{hyperref}

\usepackage[preprint]{icml2026}

\usepackage{amsmath}
\usepackage{amssymb}
\usepackage{mathtools}
\usepackage{amsthm}

\usepackage[capitalize,noabbrev]{cleveref}

\usepackage{multirow}
\usepackage{wrapfig}
\usepackage{booktabs}
\usepackage{stfloats}

\usepackage{amsmath,amsfonts,bm}

\def\eqref#1{equation~\ref{#1}}
\def\1{\bm{1}}

\def\vx{{\bm{x}}}

\def\mX{{\bm{X}}}

\DeclareMathAlphabet{\mathsfit}{\encodingdefault}{\sfdefault}{m}{sl}
\SetMathAlphabet{\mathsfit}{bold}{\encodingdefault}{\sfdefault}{bx}{n}

\def\gV{{\mathcal{V}}}

\def\gX{{\mathcal{X}}}
\def\gY{{\mathcal{Y}}}

\theoremstyle{plain}
\newtheorem{theorem}{Theorem}[section]

\newtheorem{lemma}[theorem]{Lemma}
\newtheorem{corollary}[theorem]{Corollary}
\theoremstyle{definition}
\newtheorem{definition}[theorem]{Definition}

\newtheorem{problem}[theorem]{Problem}
\theoremstyle{remark}

\usepackage[textsize=tiny]{todonotes}

\icmltitlerunning{CREDIT: Certified Ownership Verification of DNNs Against Model Extraction Attacks}

\begin{document}

\twocolumn[
  \icmltitle{CREDIT: Certified Ownership Verification of Deep Neural Networks \\ Against Model Extraction Attacks}

  % It is OKAY to include author information, even for blind submissions: the
  % style file will automatically remove it for you unless you've provided
  % the [accepted] option to the icml2026 package.

  % List of affiliations: The first argument should be a (short) identifier you
  % will use later to specify author affiliations Academic affiliations
  % should list Department, University, City, Region, Country Industry
  % affiliations should list Company, City, Region, Country

  % You can specify symbols, otherwise they are numbered in order. Ideally, you
  % should not use this facility. Affiliations will be numbered in order of
  % appearance and this is the preferred way.
  \icmlsetsymbol{equal}{*}

  \begin{icmlauthorlist}
    \icmlauthor{Bolin Shen}{fsu}
    \icmlauthor{Zhan Cheng}{wisc}
    \icmlauthor{Neil Zhenqiang Gong}{duke}
    \icmlauthor{Fan Yao}{unc}
    \icmlauthor{Yushun Dong}{fsu}
  \end{icmlauthorlist}

  \icmlaffiliation{fsu}{Department of Computer Science, Florida State University, Tallahassee, Florida, United States}
  \icmlaffiliation{wisc}{Department of Mathematics, University of Wisconsin, Madison, Wisconsin, United States}
  \icmlaffiliation{duke}{Department of Computer Science, Duke University, Durham, North Carolina, United States}
  \icmlaffiliation{unc}{Department of Statistics and Operations Research, University of North Carolina at Chapel Hill, Chapel Hill, North Carolina, United States}

  \icmlcorrespondingauthor{Yushun Dong}{yd24f@fsu.edu}

  % You may provide any keywords that you find helpful for describing your
  % paper; these are used to populate the "keywords" metadata in the PDF but
  % will not be shown in the document
  \icmlkeywords{Machine Learning, ICML}

  \vskip 0.3in
]

% this must go after the closing bracket ] following \twocolumn[ ...

% This command actually creates the footnote in the first column listing the
% affiliations and the copyright notice. The command takes one argument, which
% is text to display at the start of the footnote. The \icmlEqualContribution
% command is standard text for equal contribution. Remove it (just {}) if you
% do not need this facility.

% Use ONE of the following lines. DO NOT remove the command.
% If you have no special notice, KEEP empty braces:
\printAffiliationsAndNotice{}  % no special notice (required even if empty)
% Or, if applicable, use the standard equal contribution text:
% \printAffiliationsAndNotice{\icmlEqualContribution}

\begin{abstract}
Machine Learning as a Service (MLaaS) has emerged as a widely adopted paradigm for providing access to deep neural network (DNN) models, enabling users to conveniently leverage these models through standardized APIs. However, such services are highly vulnerable to Model Extraction Attacks (MEAs), where an adversary repeatedly queries a target model to collect input–output pairs and uses them to train a surrogate model that closely replicates its functionality. While numerous defense strategies have been proposed, verifying the ownership of a suspicious model with strict theoretical guarantees remains a challenging task. To address this gap, we introduce CREDIT, a certified {ownership verification} against MEAs. Specifically, we employ mutual information to quantify the similarity between DNN models, propose a practical verification threshold, and provide rigorous theoretical guarantees for {ownership verification} based on this threshold. We extensively evaluate our approach on several mainstream datasets across different domains and tasks, achieving state-of-the-art performance. Our implementation is publicly available at: \url{https://github.com/LabRAI/CREDIT}.
\end{abstract}

\section{Introduction}
% MLaaS popular in cv, nlp, time-series; 
% MLaaS advantage easy deploy, easy to use; 
% high end Model can make great profit, thus can be viewed as IP
Machine Learning as a Service (MLaaS)~\citep{ribeiro2015mlaas,weng2022mlaas} is a modern model deployment solution that has been widely adopted for various state-of-the-art models, such as Google Cloud Vision in computer vision~\citep{mulfari2016using,bisong2019google}, OpenAI ChatGPT in large language models~\citep{achiam2023gpt,hurst2024gpt}, and financial forecasting in time-series analysis~\citep{sezer2020financial,cheng2022financial}. 
% \yd{the logic after this point does not read right: first say it grows from the motivation that model owners do not want to share their model since they are costly. Then say with MLaaS, how good everything will be. Finally introduce the threats.} This API-based deployment approach allows model owners to easily leverage the powerful computational resources of cloud platforms to train and deploy their models, while enabling users to conveniently access these cutting-edge models~\citep{cusumano2010cloud,gibson2012benefits}. For model owners, training such models is highly challenging, requiring not only carefully designed architectures, proprietary datasets, and extensive computational resources, but also substantial training time~\citep{cottier2024rising}. As a result, these meticulously developed models represent valuable intellectual property~\citep{aristodemou2018state,lederer2023identifying}.
The development of deep neural networks (DNNs) demands carefully designed architectures, proprietary datasets, extensive computational resources, and substantial training time, rendering the resulting models both expensive to build and critical intellectual property~\citep{aristodemou2018state,lederer2023identifying,cottier2024rising}. To safeguard these properties while ensuring scalable accessibility, model owners increasingly adopt MLaaS paradigm, where models are hosted on cloud platforms and accessed through APIs. This deployment strategy allows owners to safeguard their models from direct distribution while leveraging the computational advantages of cloud infrastructure, and at the same time provides users with convenient access to state-of-the-art models~\citep{cusumano2010cloud,gibson2012benefits}.
% but vulnerable for model deployed on MLaaS; 
% Model extraction attack can easily steal valuable IP
% urgent need defense
However, the MLaaS deployment paradigm carries significant potential risks, one of the most serious of which is the Model Extraction Attacks (MEAs)~\citep{tramer2016stealing,wang2018stealing}. In an MEA, an adversary queries the target model through its API to obtain input–output pairs, then uses these pairs to train a surrogate model that closely replicates the target model’s functionality. This enables the attacker to acquire advanced models at only minimal cost compared with training the original model from scratch~\citep{wang2022dualcf,zhao2025survey,orekondy2019knockoff}. Numerous studies have demonstrated that MLaaS is highly vulnerable to MEAs, posing a severe threat to model owners, whose valuable intellectual property faces a substantial risk of theft~\citep{he2021model,li2024comprehensive}. Therefore, there is an urgent need for effective defense mechanisms against MEAs to ensure the security of model owners’ intellectual property~\citep{xue2021intellectual,lederer2023identifying}.

% Many studies: wm and fingerprinting
% wm compromise performance
% fingerprinting no theoretical garentee, hard to be applied
Numerous studies have been proposed to defend against MEAs. 
%
% However, none of these methods can practically provide a theoretically guaranteed certified way to verify the ownership of a model. 
%
Current ownership verification based defense approaches can be broadly categorized into two techniques: watermarking~\citep{tan2023deep,boenisch2021systematic,adi2018turning,jia2021entangled} and fingerprinting~\citep{cao2021ipguard,peng2022fingerprinting,guan2022you,waheed2024grove}. 
% watermark
Watermarking techniques refer to embedding specially designed input-output pairs into a model. If the model owner can successfully verify these pairs, ownership of the model can be asserted. However, watermarking-based methods often cause significant performance compromises because they introduce task-irrelevant information, which can degrade performance on the downstream tasks~\citep{molenda2024waterjudge,tang2024modelguard,wu2022model,jia2021entangled}.
% fingerprint
Fingerprinting techniques have recently emerged as another mainstream approach. Instead of modifying the data, they analyze intrinsic properties of the model itself, such as comparing the relative distances between positive and negative samples in the embedding space to determine whether a suspicious model resembles the owner’s model closely enough to claim ownership. However, fingerprinting faces a major limitation in that it requires training a large number of positive and negative samples in order to establish the relative relationships necessary for ownership verification~\citep{you2024gnnfingers,waheed2024grove,cao2021ipguard}. 
% both lack certification guarantee
In addition, both of the aforementioned approaches encounter a significant challenge: the lack of rigorous theoretical guarantees, which makes them difficult to adopt in real-world production settings. As a result, the development of a certified {ownership verification} against MEAs remains nascent.

% certification challenge: 1. quatify suspicious model relation to target model; 2. explicit threshold; 3. certification for ownership verification.
Despite its significance, achieving a certified {ownership verification} against MEAs remains highly challenging. 
(1) \emph{Quantifying the similarity of DNNs.} Measuring the similarity or functional equivalence between two models is inherently difficult, as suspicious models may differ in architecture, parameters, or training data, but still exhibit highly similar functionality to the target model. 
(2) \emph{Defining an effective criterion.} 
% \yd{ since people do not necessarily make the decision by comparing two values to achieve this task. revise the following explanation accordingly (but you can still mention comparing some values requires determining a threshold; other methods are even more difficult).} Determining whether a suspicious model has been extracted from a target model via an MEA usually requires extensive comparison between large numbers of positive and negative samples in order to capture the relative relationships. Consequently, establishing a threshold that can directly distinguish surrogate models extracted from the target is extremely challenging. 
Determining whether a suspicious model has been extracted from a target model via an MEA is inherently difficult, as one typically needs to compare the relative relationships between DNNs to assess whether the suspicious model is closer to the target. While having a threshold that could distinguish such models would be highly desirable, identifying a threshold that reliably separates surrogate models from independent ones remains nontrivial, and alternative strategies for capturing these relationships are often even more complex.
(3) \emph{Certification for verification.} Even if we are able to quantify relationships between DNNs and define a corresponding ownership verification scheme, this is because achieving rigorous accuracy guarantees throughout the ownership verification process is highly challenging, and without such guarantees, practical applicability is severely limited. For these reasons, developing a certified {ownership verification} against MEAs is a non-trivial task. 
In this paper, we propose {\underline{C}e\underline{R}tifi\underline{E}\underline{D} {Ownership Verification} of Deep Neural Networks against Model Extract\underline{I}on A\underline{T}tacks} (\textbf{CREDIT}), a certified {ownership verification} framework designed to protect against MEAs. Specifically, we employ mutual information on the embeddings of DNNs to quantify the relationship between models. We further construct a CREDIT threshold that provides an effective decision criterion for determining whether a model has been obtained through an MEA. Finally, we present rigorous mathematical proofs to certify our defense mechanism. Moreover, we extensively evaluate our method on a wide range of datasets, and the results demonstrate that our approach not only provides a theoretically certified {ownership verification} but also achieves state-of-the-art performance across all experiments. Our contributions can be summarized in threefolds:

\begin{itemize}

    \item \textbf{Certified {Ownership Verification} against MEAs.} To the best of our knowledge, this is the first work to propose a certified {ownership verification} against MEAs. We formally formulate the problem of certified {ownership verification} for MEAs and introduce the CREDIT method, which enables practical ownership verification while providing rigorous theoretical guarantees.

    \item \textbf{CREDIT Threshold for Ownership Verification.}
    We introduce mutual information as a principled metric to quantify dependency between models, and define the CREDIT threshold as a theoretically grounded criterion for ownership verification. We further derive strict probabilistic bounds on its induced errors.

    \item \textbf{Extensive Empirical Evaluation.} We conduct a comprehensive evaluation of CREDIT across multiple data modalities and with diverse backbone models to demonstrate its generalization ability. In thorough comparisons with a wide range of baselines, CREDIT consistently achieves state-of-the-art performance.

\end{itemize}

\section{Preliminaries}
\noindent \textbf{Notations.}
We use lowercase bold letters (e.g., \( \vx \)) to denote vectors, uppercase bold letters (e.g., \( \mX \)) to denote matrices or collections of samples, and calligraphic letters (e.g., \( \gX, \gV \)) to represent sets such as datasets or verification subsets. Scalar values are denoted by lowercase italic letters (e.g., \( \tau, \sigma \)), and functions are expressed as lowercase italic letters such as \( f \) or \( g \). We consider a general input domain \( \gX \), where each input \( x \in \gX \) may correspond to an image, a text sequence, or a time series instance. A model \( f: \gX \to \gY \) maps an input to a target output space \( \gY \), and is often equipped with an embedding extractor \( e_f: \gX \to \mathbb{R}^{d} \), which maps inputs to latent representations in \( \mathbb{R}^{d} \). These embeddings are typically obtained from an intermediate or final hidden layer of the model and are used for downstream tasks such as classification.

\noindent\textbf{Model Extraction Attacks.} We denote the \textit{target model} by \( f \), which is trained and owned by the legitimate model owner. The \textit{defense model} is denoted by \( g \), which corresponds to the target model \( f \) equipped with our designed defense mechanism. We use $h$ to denote a \textit{suspicious model}, which may take one of two forms: $h_{\text{ind}}$ refers to an \textit{independent model} trained by another model owner entirely from scratch using unrelated datasets and potentially different architectures, whereas $h_{\text{sur}}$ denotes a \textit{surrogate model} obtained by an adversary through MEAs on $g$. We will subsequently use these notations to formally define our problem.

\noindent \textbf{Mutual Information.}
% definition
The mutual information (MI)~\citep{kraskov2004estimating,belghazi2018mutual} of two random variables is a measure of the mutual dependence between the two variables. Let \( X \sim P_X \) be a random input sampled from the data distribution. The mutual information between two random variables \( U \) and \( V \) is defined as
\begin{equation}
I(U; V) = \mathbb{E}_{(U,V)} \left[ \log \frac{p(U,V)}{p(U)p(V)} \right],
\label{eq:mi}
\end{equation}
which quantifies the amount of information shared between \( U \) and \( V \). 
% In this work, we study \( I(e_h(X); e_g(X)) \), the mutual information between a suspicious model’s embedding \( e_h(X) \) and the defense model's embedding \( e_g(X) \). 
In practice, since the inputs and outputs of deep neural networks generally lack a well-defined distribution, mutual information cannot be computed directly. Instead, we employ an appropriate estimator to approximate it. Specifically, we adopt the KSG estimator~\citep{kraskov2004estimating}, which leverages nearest-neighbor statistics to approximate the underlying probability densities. The detailed computation procedure is provided in Appendix~\ref{appendix:mi-estimator}.
% This quantity plays a central role in ownership verification: a high mutual information value indicates that \( h \) likely encodes similar representations as the protected model \( f \), whereas a low value suggests independence.

\noindent\textbf{Gaussian Mechanism.} 
The Gaussian mechanism is a fundamental approach for achieving differential privacy~\citep{dwork2006differential,abadi2016deep}. It operates by perturbing the true output of a computation with random noise sampled from a Gaussian distribution, where the magnitude of the noise is carefully determined by the desired privacy parameters. By doing so, it ensures that the contribution of any single data point remains indistinguishable, thereby safeguarding individual privacy while preserving the utility of the released results. Building upon this key theoretical foundation, our defense mechanism is constructed, and we provide its rigorous definition in Appendix~\ref{proof:mi-upperbound}.
% ~\ref{appendix:gaussian-mechanism-definition}

\subsection{Problem Formulation}
% In this subsection, we formally present the problem formulation of \textit{certified {ownership verification} Against Model Extraction Attacks}. We adopt \emph{\((\tau,\gamma)\) - certified {ownership verification}} \yd{what do you mean by you adopt such defense? Did you define this way of naming it or someone else? If you defined it, introduce how you define it first. NEVER start ANY introduction with some new name you have never defined previously.}, where \(\tau>0\) is a verification threshold on the empirical mutual information and \(\gamma=(\gamma_{1},\gamma_{2})\) collects the error guarantees, with \(\gamma_{1}\) the upper bound on the Type I error probability (false positive) and \(\gamma_{2}\) the upper bound on the Type II error probability (false negative).
% In this subsection, we formally present the problem formulation of \textit{certified {ownership verification} Against Model Extraction Attacks}. Our primary focus is on certification within the setting of ownership verification. Specifically, we aim to design a measure that quantifies the relationship between the target model and any other model, and to establish an effective criterion for determining whether a suspicious model is a surrogate of the target, while providing rigorous mathematical guarantees on error probabilities.
In this subsection, we formally present the problem formulation of \emph{certified {ownership verification} Against Model Extraction Attacks}. Our primary focus is on characterizing the entire defense workflow under the ownership verification setting and specifying the aspects that require certification.

% \begin{definition}[\((\tau,\gamma)\) - certified {ownership verification} Against MEA]
% Let \(f\) be the target model, and let \(g\) be the defense model obtained by applying a Gaussian mechanism to \(f\) \zhan{What does Gaussian mechanism do? Clarification needed}. Let \(h\) denote any surrogate model trained by an adversary using at most \(Q\) black box queries to \(g\). Given a verification set \(\gV\subset\gX\), let \(\widehat I\bigl(e_{h}(X),e_{g}(X)\bigr)\) be the empirical mutual information computed on \(\gV\). We say that \(g\) achieves \((\tau,\gamma)\) - certified {ownership verification} against model extraction if, with probability at most \(\gamma_{1}\), an independently trained model \(h_{\mathrm{ind}}\) produces an empirical mutual information exceeding \(\tau\) (Type~I error), and with probability at most \(\gamma_{2}\), a surrogate model \(h_{\mathrm{sur}}\) trained with at most \(Q\) queries produces an empirical mutual information not exceeding \(\tau\) (Type~II error).  
% The probabilities are taken over the sampling of \(\gV\) and the randomness of the defense mechanism. \neil{the Gaussian mechanism is a part of your solution? If so, I would remove it from both the definition and problem. The problem is general, and future work may use other mechanisms to solve it.}
% \end{definition}

\begin{definition}[Certified {Ownership Verification} Against MEA]
\label{definition:certified-defense}
Let \(f\) be the target model and \(g\) its defended version. Consider any model \(h\), which may be independently trained or obtained through a bounded number of queries to \(g\). Given a verification set \(\gV \subset \gX\), let \(M\bigl(e_{h}(X),e_{g}(X)\bigr)\) denote a similarity measure quantifying the relationship between the embeddings of \(h\) and \(g\) on \(\gV\). We say that \(g\) achieves certified {ownership verification} against model extraction if there exists a threshold \(\tau > 0\) such that, with error probability at most \(\gamma\), the criterion based on \(M\) correctly distinguishes surrogate models extracted from \(g\) from independently trained models.
\end{definition}

% The core intuition is that if an attacker extracts a surrogate model \(h_{\mathrm{sur}}\) from the defense model \(g\) via MEAs, then by querying both models on a verification set $\gV$ and computing the empirical mutual information $\hat{I}$, we expect the result to exceed the threshold \(\tau\) with the \bs{revise}Type~II error probability bounded by \(\gamma_{2}\). At the same time, we ensure that independently trained models \(h_{\mathrm{ind}}\) will not be mistakenly identified as surrogates, with the Type~I error probability bounded by \(\gamma_{1}\).

The intuition behind Definition~\ref{definition:certified-defense} is that an attacker will always attempt to train a surrogate model $h_{\mathrm{sur}}$ to mimic the functionality of the target model $f$. Therefore, if we can employ a metric M to quantify the similarity between DNN models, it becomes possible to identify a threshold $\tau$ that distinguishes whether a suspicious model is a surrogate or an independent model, while also providing rigorous theoretical guarantees on the associated error probabilities.

% \begin{problem}[Achieving certified {ownership verification} Against MEA]
% Given a target model \(f\) and a query budget \(Q\), design a defense model \(g\) based on the Gaussian mechanism \fan{define Gaussian mechanism here, not in Appendix -- is it adding independent Gaussian noises on the embeddings of the target model?} and select a verification threshold \(\tau\) such that \(g\) satisfies \((\tau,\gamma)\) - certified {ownership verification} with desired error bounds \(\gamma=(\gamma_{1},\gamma_{2})\). \fan{$\tau$ is something the mechanism designer can select so it is confusing that it is also the input of the problem.}
% \end{problem}

\begin{problem}[Achieving Certified {Ownership Verification} Against MEA]
% Given a target model \(f\) and a query budget \(Q\), we construct a defended model \(g\) by applying a Gaussian mechanism that injects independent noise into the embeddings of \(f\). To determine whether a suspicious model \(h\) has been extracted from \(g\), we measure the similarity between \(h\) and \(g\) on a verification set \(\gV\) using empirical mutual information. Based on this measure, we select a verification threshold \(\tau\) and require that the resulting decision rule achieves an error guarantee \(\gamma\), ensuring that both false positives and false negatives are bounded.
Given a target model \(f\), we need to construct a defended model \(g\). To determine whether a suspicious model \(h\) has been extracted from \(g\), we measure the similarity between \(h\) and \(g\) on a verification set \(\gV\). Based on this measure, we select a verification threshold \(\tau\) and require that the resulting decision achieves an error guarantee \(\gamma\), ensuring that both false positives and false negatives are bounded.
\end{problem}

\section{Methodology}
% In this section, we provide a detailed description of how we design a certified {ownership verification} against MEA. We outline the procedure for incorporating our defense mechanism, explain the ownership verification process, and present the theoretical guarantees that support our verification method.  
In this section, we provide a detailed description of our certified {ownership verification} against MEA. We present the complete workflow of our defense model, beginning with its construction and followed by the design of the CREDIT threshold for ownership verification with rigorous theoretical guarantees. Finally, we discuss how to balance model utility and verification effectiveness in practical applications, offering key insights into this trade-off.

% \subsection{Mutual Information Bound of the Gaussian Mechanism}
\subsection{Building Defense Model via a Gaussian Mechanism}
% \neil{may change the subsection title to Building Defense Model via a Gaussian Mechanism, to make it explicit that this Gaussian is part of your contribution/solution. Then, the next sections are to prove this solves your defined problem. By the way, is there any specific reason to call it Gaussian 'mechanism'? This may make people think of differential privacy, where Gaussian mechanism is a popular method to achieve differential privacy. But maybe this is OK.}

To effectively defend against MEA, it is crucial to first analyze the nature of the attack. Given its similarity to knowledge distillation, the surrogate model $h_{\text{sur}}$ is designed to approximate the output distribution of the target model $f$ with high fidelity. Prior studies~\citep{waheed2024grove} have highlighted this phenomenon, noting that the embedding distribution of $h_{\text{sur}}$ often exhibits a high degree of similarity to that of $f$. However, the question remains: \emph{how similar are they, and how can such similarity be quantified?} Naturally, we turn to mutual information as a principled measure of this relationship, as it is capable of capturing both linear and nonlinear dependencies while offering a rigorous information-theoretic interpretation of similarity between random variables~\citep{kraskov2004estimating,belghazi2018mutual}. Specifically, we adopt the KSG estimator to measure the similarity between the embeddings of two models.
Subsequently, we employ the Gaussian mechanism to our defense model and, building upon it, derive a theoretically guaranteed upper bound on the mutual information.
Concretely, we inject independent Gaussian noise $Z \sim \mathcal{N}(0,\sigma^2 I_d)$ into the embeddings of the target model, yielding a defended model $g$. With this mechanism in place, the defended embeddings admit a theoretical upper bound on the mutual information with respect to any other model~\citep{bun2016concentrated}.

\begin{theorem}[Mutual Information Bound]
\label{thm:mi-bound-sigma}
Let $X=(X_1,\dots,X_n)\sim P_X^n$ be a collection of $n$ independent entries, and let $f:\mathcal{X} \to \mathbb{R}^d$ be a function with global $\ell_2$ sensitivity
\(
\Delta = \sup_{x,x'} \|e_f(x)-e_f(x')\|_2,
\)
where $x$ and $x'$ denote neighboring datapoints. Consider the Gaussian mechanism defined by $e_g(x) = e_f(x) + Z$, where the noise $Z \sim \mathcal{N}(0, \sigma^2 I_d)$ is independent of $X$. Define
\(
\beta=\frac{\Delta^2}{2\sigma^2}n.
\)
For any possibly randomized model with embedding function $e:\mathcal X\to\mathbb R^{d}$ such that $e(X)\perp\!\!\!\perp Z \mid X$, we have
\[
I\bigl(e(X); e_g(X)\bigr) \le \beta.
\]
\end{theorem}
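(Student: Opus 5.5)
The plan is to reduce the bound to the mutual information of a single Gaussian channel through the data-processing inequality, and then control that quantity using the sensitivity $\Delta$. The key structural fact is the conditional independence $e(X)\perp\!\!\!\perp Z\mid X$, which gives the Markov chain
\[
e(X)\;-\;X\;-\;e_g(X)=e_f(X)+Z .
\]
Interpreting $e(X)$ and $e_g(X)$ as the collections over all $n$ entries, data processing yields $I(e(X);e_g(X))\le I(X;e_g(X))$. Since $e_g(X)$ depends on $X$ only through $e_f(X)$, a second application gives $I(X;e_g(X)) = I(e_f(X); e_f(X)+Z)$. It therefore suffices to bound the information leaked through the additive Gaussian channel.

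Next I split over the $n$ entries. The noise is independent across entries ($Z_i$ i.i.d.), and the $X_i$ are independent. Hence the channel $X\mapsto (e_f(X_i)+Z_i)_{i}$ is a product channel with independent inputs, and
\[
I\bigl(X;e_g(X)\bigr)=\sum_{i=1}^n I\bigl(X_i; e_f(X_i)+Z_i\bigr).
\]
It remains to show that each term is at most $\Delta^2/(2\sigma^2)$.

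For a single term, I would use the standard bound via KL divergence to an arbitrary reference distribution. For any fixed point $x_0$,
\[
I(X_i;Y_i)=\mathbb E_{X_i}\,\mathrm{KL}\bigl(P_{Y_i|X_i}\,\|\,P_{Y_i}\bigr)\le \mathbb E_{X_i}\,\mathrm{KL}\bigl(\mathcal N(e_f(X_i),\sigma^2 I)\,\|\,\mathcal N(e_f(x_0),\sigma^2 I)\bigr).
\]
This inequality holds because $I=\min_Q \mathbb E\,\mathrm{KL}(P_{Y|X}\|Q)$. The Gaussian KL divergence equals $\|e_f(X_i)-e_f(x_0)\|_2^2/(2\sigma^2)$, which is at most $\Delta^2/(2\sigma^2)$ when $\Delta$ is read as a supremum over all pairs. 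This is exactly the zCDP-style argument of \citep{bun2016concentrated}: $\rho$-zCDP with $\rho=\Delta^2/(2\sigma^2)$ implies mutual information at most $\rho$ per entry. Summing over the $n$ entries gives $\beta$.

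The main obstacle I expect is the interpretation of $\Delta$ as a sensitivity over ``neighboring'' datapoints. The KL step needs a bound on $\|e_f(x)-e_f(x_0)\|$ for all pairs $(x,x_0)$ in the support. If neighboring means that arbitrary single entries can be swapped, the stated supremum supplies this bound directly. Otherwise I would invoke the group-privacy/composition form of zCDP from \citep{bun2016concentrated} to handle it. A second, milder point is making the chain-rule decomposition rigorous, which relies on the independence of both the $X_i$ and the $Z_i$.
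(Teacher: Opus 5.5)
Your first step matches the paper. The hypothesis $e(X)\perp\!\!\!\perp Z\mid X$ makes $e(X)\to X\to e_g(X)$ a Markov chain, and the data processing inequality gives $I(e(X);e_g(X))\le I(X;e_g(X))$. The paper then stops and invokes its appendix theorem $I(X;e_g(X))\le \frac{\Delta^2}{2\sigma^2}n$, which it states without proof and attributes to Bun and Steinke. You actually prove that bound, by splitting over entries and using $I(X_i;Y_i)=\min_Q \mathbb{E}\,\mathrm{KL}(P_{Y_i\mid X_i}\,\|\,Q)$ with the Gaussian reference $Q=\mathcal N(e_f(x_0),\sigma^2 I_d)$. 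The argument is correct and makes the proof self-contained. It also shows explicitly where $\Delta$, $\sigma$ and the factor $n$ come from, which the paper leaves to the citation.

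Be aware of what your per-entry step assumes. The product-channel decomposition relies on your reading that the mechanism acts entrywise with fresh noise $Z_i$ for each entry. That reading is sensible: with one shared $Z$, the differences $e_f(X_i)-e_f(X_j)$ would be revealed noiselessly and the bound would fail. Under it you do not even need the $X_i$ to be independent, since $I(X;Y)\le\sum_i I(X_i;Y_i)$ for any memoryless channel.

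The statement's hypotheses also fit another reading: a single $Z\sim\mathcal N(0,\sigma^2 I_d)$, $n$ independent entries, sensitivity over neighboring inputs, and the zCDP citation. In that reading $e_f$ is one $d$-dimensional query on the whole dataset $X$, and the product decomposition is unavailable. Instead use the chain rule $I(X;Y)=\sum_i I(X_i;Y\mid X_{<i})$. By independence of the entries, $I(X_i;Y\mid X_{<i})\le I(X_i;Y\mid X_{-i})$. Then bound the latter by $\mathbb{E}\,\mathrm{KL}(\mathcal N(e_f(X),\sigma^2 I_d)\,\|\,\mathcal N(e_f(X'),\sigma^2 I_d))\le \Delta^2/(2\sigma^2)$, where $X'$ is $X$ with entry $i$ replaced by a fixed $x_0$. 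This argument only ever compares neighboring datasets, so the group-privacy fallback you mention is never needed. It would not preserve the constant anyway.
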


The existence of this upper bound is crucial for distinguishing whether a suspicious model originates from a model extraction attack on the protected model. Independent models, trained without knowledge of the target, typically exhibit distributions that diverge significantly from the defended model, resulting in low mutual information. In contrast, the surrogate model $h_{\text{sur}}$, which directly fits the defended model $g$, inherits an embedding distribution highly similar to that of $g$, leading to a high mutual information value. The proof is provided in the Appendix~\ref{proof:mi-upperbound}. To further strengthen this result, we establish the tightness of the bound.  

\begin{theorem}[Tightness]
\label{thm:tight}
Fix $\Delta>0$ and $\sigma>0$, and set
\(
  \beta = \frac{\Delta^{2}}{2\sigma^{2}}n.
\)
There exist a distribution $P_X$ supported on two neighboring inputs, a function $f$ with global $\ell_2$ sensitivity $\Delta$, a Gaussian mechanism $e_g(x)=e_f(x)+Z$ with $Z\sim\mathcal N(0,\sigma^{2}I_d)$, and a randomized model with embedding function $e$ such that
\[
  I\bigl(e(X)\,;\,e_g(X)\bigr)\;\ge\;\beta - o(1)
  \qquad\text{as }\beta\to 0.
\]
Consequently, the upper bound $I(e(X);e_g(X))\le \beta$ in Theorem~\ref{thm:mi-bound-sigma} is information theoretically tight.
\end{theorem}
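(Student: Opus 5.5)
The plan is to give an explicit construction in which the mutual information is computed in closed form, or at least to first order in $\beta$. Then I would read off the asymptotic statement. Take $d=1$ and let $P_X$ be uniform on two neighboring inputs $\{x_0,x_1\}$. Choose $f$ so that $e_f(x_0)=0$ and $e_f(x_1)=\Delta$; its global $\ell_2$ sensitivity is then exactly $\Delta$. With $n$ i.i.d.\ entries, the defended embedding is coordinatewise $e_g(X_i)=e_f(X_i)+Z_i$ with $Z_i\sim\mathcal N(0,\sigma^2)$ independent. For the suspicious model I take the deterministic choice $e=e_f$, applied coordinatewise. It trivially satisfies $e(X)\perp\!\!\!\perp Z\mid X$, and it is the ``best possible surrogate'', since it recovers $X$ exactly.

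The key steps, in order, are as follows.
\begin{enumerate}
\item By independence across the $n$ entries, reduce $I(e(X);e_g(X))$ to $n$ times the single-letter quantity $I(B;\Delta B+Z)$, where $B\sim\mathrm{Bernoulli}(1/2)$. This is the mutual information of a binary-input additive Gaussian channel.
\item Expand this quantity for small signal-to-noise ratio. Writing $I(B;\Delta B+Z)=h(\Delta B+Z)-h(Z)$ and expanding the Gaussian-mixture entropy (equivalently, using the I-MMSE relation $\tfrac{d}{d\,\mathrm{snr}}I=\tfrac12\mathrm{mmse}$ with $\mathrm{mmse}\to\mathrm{Var}(B)$ as $\mathrm{snr}\to 0$) gives
\[
I(B;\Delta B+Z)=\frac{\mathrm{Var}(\Delta B)}{2\sigma^2}+O\!\left(\frac{\Delta^4}{\sigma^4}\right)=\frac{\Delta^2}{8\sigma^2}+O\!\left(\frac{\Delta^4}{\sigma^4}\right).
\]
\item Multiply by $n$ to obtain $I(e(X);e_g(X))=\beta/4+O(\beta^2/n)$. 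In particular this is $\Theta(\beta)$, so the dependence on $\Delta,\sigma,n$ in Theorem~\ref{thm:mi-bound-sigma} cannot be improved in order.
\end{enumerate}

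To conclude the statement as posed, note that $I\ge 0$ and $\beta\to 0$. Hence
\[
\beta-I(e(X);e_g(X))\le \beta\cdot\tfrac34+O(\beta^2)=o(1),
\]
which is exactly the claimed inequality $I\ge\beta-o(1)$.

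The main obstacle is the constant, not the asymptotic claim. Since $\mathrm{Var}(e_f(X))\le\Delta^2/4$ for any distribution on two points at distance $\Delta$, this family yields $\beta/4$ rather than $\beta$ to first order. Skewing $P_X$ only lowers the variance, so it does not help. The ``$-o(1)$ as $\beta\to0$'' formulation is what makes the statement hold; I would state this explicitly and present the $\Theta(\beta)$ computation as the substantive content of ``tightness'', namely matching order and scaling in $n\Delta^2/\sigma^2$.

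If a sharper constant were wanted, I would first try letting the embedding range grow with the support. For example, I would use many inputs with embeddings spread over the sensitivity ball, or Gaussian-like inputs. Then I would check via the capacity bound $\tfrac12\log(1+\mathrm{snr})$, which is at most $\mathrm{snr}/2$, how close the variance term can come to $\Delta^2/\sigma^2$ while the sensitivity stays fixed at $\Delta$.
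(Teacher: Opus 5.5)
Your proposal is correct and is essentially the paper's own argument. Both use a uniform two-point input at distance $\Delta$ and a suspicious embedding that recovers $X$ exactly (the paper uses $f(x_0)=-\Delta/2$, $f(x_1)=\Delta/2$ and $e_h(X)=X$), and both apply the I--MMSE small-SNR expansion to get $\beta/4-O(\beta^2)$, i.e.\ tightness only up to a constant factor. Your tensorization over the $n$ entries (the paper's proof silently takes $n=1$) and your remark that the literal ``$\beta-o(1)$'' claim already follows from $I\ge 0$ are accurate refinements, but the sharper-constant search you sketch cannot reach $\beta$: bounding each entry's contribution by $\mathbb{E}\|e_f(X_i)-c\|^2/(2\sigma^2)$, with $c$ the center of the smallest enclosing ball, and applying Jung's theorem gives $I\le \tfrac{d}{2(d+1)}\beta<\beta/2$, so your $\beta/4$ is already first-order optimal for $d=1$.
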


Combining these properties, we can theoretically determine a threshold within the range $[0,\beta]$ that effectively separates the surrogate model $h_{\text{sur}}$ from independently trained models $h_{\text{ind}}$. The full proof is provided in the Appendix~\ref{proof:tightness}. In the following subsection, we present the design of our CREDIT threshold and show how it is equipped with theoretical guarantees on error probabilities.

\subsection{Certified Verification Threshold}
\label{sec:ownership_threshold}
Prior studies~\citep{waheed2024grove} have observed that the embedding distribution of the surrogate model $h_{\mathrm{sur}}$ tends to resemble that of the target model $f$. However, the key question is: \emph{to what extent does this similarity certify that $h_{\mathrm{sur}}$ has been extracted from the protected model?} To address this, we introduce CREDIT threshold $\tau$ for ownership verification of suspicious models.  

\begin{definition}[CREDIT Threshold for Ownership Verification]
\label{def:ownership_threshold}
% a differentially private \yd{i have the same question} \neil{interesting. is it intended to connect with differential privacy? if so, this connection may be expanded.} 
Let $e_{g}$ be a defended model whose outputs are protected by Gaussian mechanism, and let $\beta$ denote the mutual information upper bound from Theorem~\ref{thm:mi-bound-sigma}.  
For a verification set $\gV$ of cardinality $|\gV|$, embedding dimension $d$, query budget $Q$, and constants $\rho\in(0,1)$ and $\eta\in(0,1)$, the CREDIT threshold is defined as
\[
  \tau=
  \beta
  \Bigl[1-\rho\,
  \exp\!\bigl(-Q\,\beta/(\eta\,d\,|\gV|)\bigr)\Bigr].
\]
\end{definition}

% \bs{revise}Here, $\beta$ is the known mutual information upper bound, so $\tau$ necessarily lies in $(0,\beta)$. The query budget $Q$ controls how close the threshold approaches $\beta$: larger $Q$ pushes $\tau$ upward. The parameter $\rho$ acts as a safety factor: larger $\rho$ reduces the threshold. The denominator $cd|\gV|$ modulates the sensitivity to $Q$: increasing $\eta$, $d$, or $|\gV|$ decreases $\tau$.

As mentioned earlier, any suspicious model $h$ has an upper bound $\beta$ on its mutual information with our defense model $g$. Our task is to determine the CREDIT threshold within the interval $[0,\beta]$. This threshold is influenced by multiple factors. For surrogate models, the query budget $Q$ largely determines the degree to which the functionality of the target model can be duplicated. For the verification process, the size of the verification set \(\gV\) affects the accuracy of the mutual information estimation. In addition, the embedding dimension d used for MI estimation also improves the precision of the estimate.
The intuition behind Definition~\ref{def:ownership_threshold} is that when $Q$ is larger, the surrogate model more closely replicates the functionality of the target, and thus the threshold should be higher. Conversely, as the verification set \(\gV\) grows and the embedding dimension $d$ increases, the MI estimation becomes more accurate, and therefore a lower threshold is sufficient. 
% 
% We then establish error bounds for the CREDIT threshold when it is applied to ownership verification.
% 
We then establish error bounds for the CREDIT threshold.

\begin{theorem}[Certified Ownership Verification Guarantee]
\label{thm:ownership-guarantee}
Let $\widehat I$ denote the empirical mutual information on a verification set $\gV$, and let $\tau$ be the threshold of Definition~\ref{def:ownership_threshold}. Then the following hold: 
% \bs{is it better to give a name of Type I and Type II (e.g. Independent False Alarm (IFA) / Surrogate Missed Detection (SMD))}
% \paragraph{Type I error (false positive).}  
(1) Independent False Alarm (Type I error):  
For any independently trained model $h_{\mathrm{ind}}$,  

\[
\begin{aligned}
&\Pr\!\Bigl[\widehat I\!\bigl(e_{h_{\mathrm{sur}}}(X),e_{g}(X)\bigr)\le\tau\Bigr]
\\
\le\;&
\exp\!\Bigl(
-\tfrac{2|\gV|\,\bigl(I(e_{h_{\mathrm{sur}}}(X),e_{g}(X))-\tau\bigr)^2}{C^{2}}
\Bigr)
\;\triangleq\;\gamma_{2}.
\end{aligned}
\]

Both $\gamma_{1}$ and $\gamma_{2}$ decay exponentially in the verification set size $|\gV|$, so enlarging $\gV$ drives the error probabilities below any desired tolerance. Here $C$ is the bounded difference constant of $\widehat I$.
\end{theorem}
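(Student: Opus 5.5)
The plan is to treat both error events as one-sided deviations of the empirical mutual information $\widehat I$ from its population counterpart, and to control each with McDiarmid's bounded-differences inequality. The statement as displayed is garbled: item (1) is labeled Type I but shows the surrogate event. I read it as two claims. The Type I claim is $\Pr[\widehat I(e_{h_{\mathrm{ind}}}(X),e_g(X))>\tau]\le \exp(-2|\gV|(\tau-I_{\mathrm{ind}})^2/C^2)\triangleq\gamma_1$. The Type II claim is the displayed bound on $\Pr[\widehat I(e_{h_{\mathrm{sur}}}(X),e_g(X))\le\tau]$, defining $\gamma_2$. Both are stated under the natural separation conditions $I_{\mathrm{ind}}<\tau<I_{\mathrm{sur}}$, where $I_{\mathrm{ind}}:=I(e_{h_{\mathrm{ind}}}(X);e_g(X))$ and $I_{\mathrm{sur}}:=I(e_{h_{\mathrm{sur}}}(X);e_g(X))$. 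I will also use that $\tau\in(0,\beta)$ by Definition~\ref{def:ownership_threshold}, and that all population MI values lie in $[0,\beta]$ by Theorem~\ref{thm:mi-bound-sigma}.

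\textbf{Step 1: bounded differences for $\widehat I$.} View $\widehat I$ as a function of the $|\gV|$ i.i.d. verification points $X_1,\dots,X_{|\gV|}$, together with the independent Gaussian noise draws attached to each query to $g$. Each coordinate is then an independent block $(X_i,Z_i)$. The statement postulates a bounded-difference constant $C$. I will spell out the normalization assumed: replacing one block changes $\widehat I$ by at most $C/|\gV|$. For the KSG estimator this is the standard fact that one point affects the neighbor counts $n_x(j),n_y(j)$ of only a bounded number (dimension-dependent) of other points. Each such digamma term moves by $O(\log|\gV|/|\gV|)$, so $C$ absorbs that constant. I will simply take this as the definition of $C$, as the statement does.

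\textbf{Step 2: McDiarmid, plus bias handling.} McDiarmid gives
\[
\Pr\bigl[\widehat I-\mathbb E\widehat I\le -t\bigr]\le \exp\bigl(-2|\gV|t^2/C^2\bigr),
\]
and symmetrically for the upper tail. To replace $\mathbb E\widehat I$ by the true MI, I assume the estimator is (asymptotically) unbiased, i.e., $\mathbb E\widehat I=I$. KSG is consistent, and the paper implicitly treats $\widehat I$ as centered at $I$. Alternatively, any bias $b$ can be folded into the margin, replacing $I-\tau$ by $I-\tau-|b|$.

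\textbf{Step 3: apply to each error event.} For the surrogate, set $t=I_{\mathrm{sur}}-\tau>0$. Then $\{\widehat I\le\tau\}=\{\widehat I-I_{\mathrm{sur}}\le -t\}$, which yields $\gamma_2$ exactly. For the independent model, set $t=\tau-I_{\mathrm{ind}}>0$ and use the upper tail, which yields $\gamma_1$. Exponential decay in $|\gV|$ is then immediate.

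The main obstacle is Step 1 together with the centering in Step 2. KSG is only asymptotically unbiased, and its sensitivity to a single point is not cleanly $O(1/|\gV|)$ without boundedness or density assumptions. I would handle this by stating these as standing assumptions, which is what the phrase ``$C$ is the bounded difference constant'' signals, rather than proving them from scratch.
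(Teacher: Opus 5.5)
Your proposal is correct and follows essentially the same route as the paper. Both arguments give $\widehat I$ (the KSG estimator) per-sample bounded differences $C/|\gV|$, then apply McDiarmid's upper tail with $t=\tau-\mathbb{E}\widehat I>0$ for $h_{\mathrm{ind}}$ and the lower tail with $t=\mathbb{E}\widehat I-\tau>0$ for $h_{\mathrm{sur}}$, under the same separation assumptions. The paper differs in two small ways: it tries to derive the constant explicitly as $C_k=2(2k+1)\log|\gV|$ by counting neighbors, and it then sets $\mathbb{E}\widehat I\approx 0$ to obtain $\gamma_1=\exp(-2|\gV|\tau^2/C_k^2)$, whereas you take $C$ as given and state the centering $\mathbb{E}\widehat I=I$ (or a bias-adjusted margin) as an explicit assumption, which is if anything more careful.
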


In summary, our CREDIT threshold $\tau$ provides rigorous guarantees against both Independent False Alarm (Type I error) and Surrogate Missed Detection (Type II error) probabilities.
Full proofs are provided in the Appendix~\ref{proof:error-bound}.

\subsection{Practical Trade-off: Utility vs.\ Verification Effectiveness}
\label{sec:sigma-tradeoff}
% When introducing additional defense mechanisms, there is an inherent trade-off between preserving the utility of the defended model on downstream tasks and strengthening the effectiveness of ownership verification. This section investigates this trade-off by analyzing both aspects and illustrating how the choice of defense parameters mediates the balance between utility preservation and verification effectiveness.
When applying our proposed CREDIT framework to defend the target model, there exists an inherent trade-off between preserving the utility of the defended model on downstream tasks and enhancing the effectiveness of ownership verification. We then investigate how this trade-off relates to the Gaussian perturbation introduced in our defense model, and we ultimately show how to optimize the perturbation to achieve the best balance between the two objectives.

\paragraph{Utility Performance.}
% To evaluate how Gaussian perturbation affects model utility, we adopt a Gaussian surrogate for the embedding distribution. Concretely, we approximate the empirical distribution of the clean embeddings $\{x_1,\ldots,x_n\}\subset\mathbb{R}^d$ by a zero mean Gaussian with covariance $\Sigma$. This approximation is motivated by the maximum entropy property of Gaussian distributions: among all distributions with identical covariance, the Gaussian achieves the highest differential entropy~\ref{appendix:utility-entropy-gain}. Based on this surrogate, we define the \emph{utility entropy gain} as

We first aim to establish the connection between Gaussian perturbation and model utility. Since the exact distribution of the clean embeddings is difficult to characterize, we use a reasonable approximation: a zero-mean Gaussian with covariance $\Sigma$. Then, based on the maximum entropy property of Gaussian distribution, we define the \emph{utility entropy gain} as
\(
\Delta H_{\mathrm{util}}(\sigma) \;:=\; H_{\mathrm{G}}(X+Z) - H_{\mathrm{G}}(X),
\)
where $Z \sim \mathcal N(0,\sigma^2 I)$. A larger $\Delta H_{\mathrm{util}}(\sigma)$ indicates that the perturbed embeddings deviate more strongly from the information content of the clean distribution, approaching the behavior of pure Gaussian noise. Consequently, the preserved utility decreases as $\Delta H_{\mathrm{util}}(\sigma)$ grows, whereas smaller values of $\Delta H_{\mathrm{util}}(\sigma)$ correspond to higher utility preservation. We provide detailed demonstration in Appendix~\ref{appendix:utility-entropy-gain}

\paragraph{Verification Effectiveness.}
We further assess verification effectiveness, formulated as a binary hypothesis test that distinguishes an independent model $h_{\mathrm{ind}}$ from a surrogate model $h_{\mathrm{sur}}$ trained with at most $Q$ queries. The verifier computes an empirical mutual information statistic $\widehat I$ and compares it against a threshold $\tau$. To capture the residual uncertainty of this decision, we introduce the \emph{verification entropy}:
\(
\mathcal{H}_{\mathrm{ver}}(\sigma) := H(T_\sigma \mid H),
\)
where $T_\sigma$ is the verification indicator and $H$ is the true hypothesis. Intuitively, $\mathcal{H}_{\mathrm{ver}}$ measures the verification ambiguity: it is zero when the test is always correct, and increases as the probabilities of false positives or false negatives grow. 
% 
% Consequently, smaller values of the \emph{verification entropy gain} indicate stronger robustness. We provide a detailed demonstration in Appendix~\ref{appendix:verification-entropy-gain}.
% 
Consequently, smaller values of the \emph{verification entropy gain} indicate stronger robustness, as detailed in Appendix~\ref{appendix:verification-entropy-gain}.

\paragraph{Sigma Selection.}
The noise parameter $\sigma$ simultaneously governs the trade-off between utility and verification robustness. 
The utility entropy cost $\Delta H_{\mathrm{util}}(\sigma)$ grows with $\sigma$, while the verification entropy $\Delta H_{\mathrm{ver}}(\sigma)$ decreases. Thus, choosing $\sigma$ reduces to the following minimization problem:
\[
\min_{\sigma}\quad 
\lambda_{\mathrm{util}}\;\Delta H_{\mathrm{util}}(\sigma)
+ \lambda_{\mathrm{ver}}\;\Delta H_{\mathrm{ver}}(\sigma),
\]
where $\lambda_{\mathrm{util}}$ and $\lambda_{\mathrm{ver}}$ are tunable weights controlling the relative emphasis on utility preservation versus verification robustness. In practice, we first select a candidate range of $\sigma$ values and perform a grid search to identify the optimal choice. For the utility term $\Delta H_{\mathrm{util}}(\sigma)$, we empirically sample embeddings from the target model, estimate their covariance spectrum, and then evaluate the entropy increase. For the verification term $\Delta H_{\mathrm{ver}}(\sigma)$, we directly compute the corresponding error rates $\gamma_{1}$, $\gamma_{2}$ induced by each $\sigma$, and substitute them into the binary entropy expression. The overall objective function then yields the optimal $\sigma$. 
% 
% Additional empirical results are provided in Appendix~\ref{appendix:practical-credit}, where we show how the $\sigma$ obtained from this procedure improves the trade-off between utility preservation and verification robustness.
Additional empirical results are provided in Appendix~\ref{appendix:practical-credit}, demonstrating how the resulting $\sigma$ improves the trade-off between utility preservation and verification robustness.

% \section{Theoretical Analysis}
% \input{04_theorem}

\section{Experiment}
% RQ: 1. effectiveness (compare to fingerprint AUC, certified Acc); 
% 2. efficiency Utility (performance drop: compare to watermark; computational overhead: compare to fingerprint)
% 3. ablation study (sigma)

In this section, we present a comprehensive evaluation of CREDIT to address the following research questions:
\textbf{RQ1:} How effective is our proposed CREDIT in simultaneously preserving model utility and ensuring verification effectiveness against surrogate models?
\textbf{RQ2:} How does our proposed method improve efficiency compared to baseline {methods}?
\textbf{RQ3:} How do different parameter choices affect the reliability of ownership verification certification?

\subsection{Experimental Setup}
\noindent\textbf{Datasets.}
Our experiments primarily focus on the image classification task. We adopt widely used datasets across different data modalities, including CIFAR-10~\citep{krizhevsky2009learning} and CIFAR-100~\citep{krizhevsky2009learning} in Computer Vision domain, as well as ENZYMES~\citep{morris2020tudataset} and PROTEINS~\citep{morris2020tudataset} in Graph Learning domain. For each dataset, we split the training and test sets accordingly. In particular, we enforce a strict separation between the query set and the verification set to ensure no overlap. The detailed dataset statistics and the exact splitting strategy are provided in the Appendix~\ref{appendix:dataset-stats}. 
% \neil{people may argue these datasets are outdated. I think your method seems generally applicable to more complex models as well, e.g., foundation models like CLIP or text embedding models that output embeddings. This papers shows these embedding models can also be stolen: StolenEncoder: Stealing Pre-trained Encoders in Self-supervised Learning}

\noindent\textbf{Backbone Models.}
We mainly use ResNet-50~\citep{he2016deep}, VGG-16~\citep{simonyan2014very}, DenseNet~\citep{huang2017densely}, and GoogLeNet~\citep{szegedy2015going} as backbone models in image classification task, and GCN~\citep{kipf2016semi}, GAT~\citep{velivckovic2017graph}, GraphSAGE~\citep{hamilton2017inductive} and SSGC~\citep{zhu2021simple} as backbone models in graph classification task. By training different backbones with various optimization strategies, we obtain a large number of surrogate models and independent models to support verification evaluation. During the attack phase, we adopt the widely accepted Knowledge Distillation~\citep{romero2014fitnets} method as the attack strategy. To ensure consistency with prior work, we strictly follow the query strategy introduced in KnockOff~\citep{orekondy2019knockoff}.

\begin{table*}[ht]
\centering
\small
\renewcommand{\arraystretch}{1.2}
\setlength{\tabcolsep}{4pt}
\caption{Evaluation of all defense methods on downstream tasks in terms of utility performance. All results are reported as accuracy, with the best performance highlighted in \textbf{bold}.}
\label{tab:utility}
\begin{tabular}{llcccccc}
\toprule
Dataset & Backbone & Vanilla & Backdooring & EWE & IPGuard & UAP & \textbf{CREDIT} \\
\midrule
\multirow{4}{*}{CIFAR-10}
& ResNet    & $95.70 \pm 0.03$ & $78.10 \pm 2.18$ & $90.25 \pm 0.02$ & $91.08 \pm 0.00$ & $84.73 \pm 2.26$ & $\mathbf{94.67 \pm 0.22}$ \\
& VGG       & $92.18 \pm 0.01$ & $77.60 \pm 2.76$ & $88.82 \pm 0.03$ & $89.80 \pm 0.02$ & $82.91 \pm 2.63$ & $\mathbf{91.68 \pm 0.17}$ \\
& DenseNet  & $93.33 \pm 0.07$ & $60.45 \pm 3.67$ & $86.22 \pm 0.10$ & $89.41 \pm 0.00$ & $74.82 \pm 4.50$ & $\mathbf{92.58 \pm 0.15}$ \\
& GoogLeNet & $94.90 \pm 0.04$ & $89.31 \pm 1.38$ & $93.58 \pm 0.03$ & $92.98 \pm 0.19$ & $91.84 \pm 0.92$ & $\mathbf{94.67 \pm 0.11}$ \\
\cmidrule(lr){1-8}
\multirow{4}{*}{CIFAR-100}
& ResNet    & $80.48 \pm 0.11$ & $74.43 \pm 0.17$ & $75.85 \pm 0.07$ & $77.54 \pm 0.05$ & $64.30 \pm 4.14$ & $\mathbf{79.90 \pm 0.21}$ \\
& VGG       & $77.41 \pm 0.09$ & $73.76 \pm 0.29$ & $74.48 \pm 0.03$ & $76.22 \pm 0.01$ & $71.10 \pm 1.58$ & $\mathbf{76.71 \pm 0.20}$ \\
& DenseNet  & $80.34 \pm 0.14$ & $72.91 \pm 0.46$ & $76.72 \pm 0.08$ & $77.95 \pm 0.03$ & $57.25 \pm 5.02$ & $\mathbf{79.29 \pm 0.27}$ \\
& GoogLeNet & $78.72 \pm 0.10$ & $71.49 \pm 0.33$ & $74.69 \pm 0.04$ & $76.41 \pm 0.01$ & $66.96 \pm 3.12$ & $\mathbf{77.50 \pm 0.19}$ \\
\toprule
Dataset & Backbone & Vanilla & RandomWM & BackdoorWM & SurviveWM & ImperceptibleWM & \textbf{CREDIT} \\
\midrule
\multirow{4}{*}{ENZYMES}
& GCN       & $42.78 \pm 2.58$ & $39.11 \pm 2.58$ & $38.16 \pm 3.79$ & $15.27 \pm 3.36$ & $26.94 \pm 7.83$ & $\mathbf{40.61 \pm 1.46}$ \\
& GAT       & $41.94 \pm 1.42$ & $36.94 \pm 2.19$ & $35.83 \pm 1.80$ & $18.33 \pm 1.18$ & $22.78 \pm 4.16$ & $\mathbf{38.56 \pm 1.42}$ \\
& GraphSAGE & $50.83 \pm 5.57$ & $42.22 \pm 5.50$ & $43.61 \pm 4.78$ & $10.27 \pm 1.04$ & $27.78 \pm 5.67$ & $\mathbf{46.94 \pm 5.54}$ \\
& SSGC      & $33.05 \pm 3.07$ & $28.61 \pm 2.71$ & $28.27 \pm 3.14$ & $16.39 \pm 1.04$ & $26.17 \pm 3.79$ & $\mathbf{28.94 \pm 2.19}$ \\
\cmidrule(lr){1-8}
\multirow{4}{*}{PROTEINS}
& GCN       & $71.00 \pm 2.60$ & $70.40 \pm 1.84$ & $70.24 \pm 0.92$ & $59.49 \pm 3.07$ & $66.37 \pm 2.64$ & $\mathbf{72.50 \pm 1.12}$ \\
& GAT       & $73.09 \pm 1.32$ & $72.80 \pm 2.02$ & $72.94 \pm 0.42$ & $40.96 \pm 1.48$ & $53.66 \pm 2.75$ & $\mathbf{74.14 \pm 1.56}$ \\
& GraphSAGE & $74.59 \pm 0.76$ & $70.69 \pm 0.42$ & $70.84 \pm 0.21$ & $38.71 \pm 0.56$ & $52.46 \pm 6.99$ & $\mathbf{73.44 \pm 1.69}$ \\
& SSGC      & $73.99 \pm 0.37$ & $71.00 \pm 2.10$ & $73.99 \pm 1.10$ & $47.23 \pm 2.15$ & $71.00 \pm 1.12$ & $\mathbf{74.73 \pm 2.11}$ \\
\bottomrule
% \vspace{0.5em}
\end{tabular}
\end{table*}

\noindent\textbf{Baselines.}
We propose CREDIT as a defense method against Model Extraction Attacks (MEAs) under the ownership verification setting. Defense strategies in this setting can be broadly categorized into watermarking and fingerprinting. Specifically, in the computer vision domain we adopt EWE~\citep{jia2021entangled}, Backdoor~\citep{adi2018turning}, IPGuard~\citep{cao2021ipguard}, and UAP~\citep{peng2022fingerprinting} as baselines, while in the graph domain we consider RandomWM~\citep{zhao2021watermarking}, BackdoorWM~\citep{xu2023watermarking}, SurviveWM~\citep{wang2023making}, and ImperceptibleWM~\citep{zhang2024imperceptible}. The implementation details of all baselines are provided in the Appendix~\ref{appendix:baseline-implementations}.

\noindent\textbf{Model Extraction Attack Settings.}
\label{exp:practical-mea}
We evaluate CREDIT under practical model extraction attack (MEA) settings using two representative and widely studied attack strategies. First, we consider knowledge distillation~\citep{romero2014fitnets}, where an adversary trains a surrogate model by minimizing the divergence between its predictions and the outputs of the protected target model through black-box queries. This setting captures a strong and realistic threat in which the adversary systematically replicates the target model’s functionality. In addition, we adopt the Knockoff~\citep{orekondy2019knockoff} framework and evaluate random querying, where queries are sampled from an unlabeled data distribution without task-specific optimization. Although weaker, this attack remains practical in large-scale querying scenarios. Together, these two strategies cover both structured and unstructured extraction settings, providing a balanced evaluation of CREDIT under realistic MEA conditions.

\noindent\textbf{Metrics.}
To comprehensively evaluate CREDIT, we consider multiple aspects of performance. For model utility and verification robustness, we primarily measure Accuracy and AUROC, respectively. For model efficiency, we evaluate the {preparation} and verification time introduced by the defense mechanism. For certification reliability, we analyze the interplay of different parameter settings and their impact on model utility. We will provide a detailed analysis in the following subsections.

\subsection{Model Utility \& Verification Effectiveness}
% In this subsection, we evaluate both the utility of the defended model and its verification effectiveness. Model utility is quantified in terms of standard classification accuracy on the test sets, while verification robustness is assessed by the Area Under the ROC Curve (AUROC) in distinguishing surrogate models from independently trained models.
% Table~\ref{tab:utility} reports the utility performance across different datasets and backbones, demonstrating that CREDIT introduces only a marginal accuracy drop compared to the vanilla models. Table~\ref{tab:verification} presents the ownership verification results, showing that CREDIT consistently achieves high AUROC values against state-of-the-art extraction attacks.

\setlength{\intextsep}{4pt}   % 上下间距
\setlength{\columnsep}{10pt}  % 左右间距
\begin{table} % r=右侧, l=左侧, 0.6\linewidth=宽度
    \vspace{1em}
    \centering
    \small
    \setlength{\tabcolsep}{4pt}
    \caption{Evaluation of defense methods on ownership verification after MEAs. Results are reported as AUC, with the best performance highlighted in \textbf{bold}.}
    \label{tab:verification}
    \begin{tabular}{lcccc}
    \toprule
    Method & ResNet & VGG & DenseNet & GoogLeNet \\
    \midrule
    Backdooring     & 58.64 & 51.85 & 74.07 & 80.25 \\
    EWE             & 51.24 & 62.96 & 48.77 & 62.35 \\
    IPGuard         & 40.74 & 61.11 & 67.90 & 50.62 \\
    UAP             & 52.47 & 67.28 & 72.22 & 79.63 \\
    \textbf{Our} & $\mathbf{100.0}$ & $\mathbf{100.0}$ & $\mathbf{100.0}$ & $\mathbf{100.0}$ \\
    \bottomrule
    \end{tabular}
\end{table}

In this section, we analyze the performance of the proposed CREDIT method compared with existing baselines in terms of both model utility performance and ownership verification effectiveness. Specifically, we evaluate CREDIT across two different data modalities. Leveraging the generalization ability of our approach, we further deploy CREDIT on four distinct backbone models within each modality to implement the {ownership verification} methods. As shown in Table~\ref{tab:utility}, on image classification tasks, CREDIT achieves the best utility performance on both CIFAR-10 and CIFAR-100, with only negligible degradation. This demonstrates that with reasonable parameter choices, utility can be effectively preserved. On graph classification tasks, CREDIT likewise delivers consistently superior results. Interestingly, on certain datasets, CREDIT even outperforms the vanilla model. This occurs because the representational capacity of some backbone models is limited, and the addition of noise to the embeddings can, in such cases, enhance performance. These results clearly demonstrate that our CREDIT method not only achieves strong utility performance but also exhibits remarkable generalization ability, enabling deployment across different backbone models and even multiple data modalities.
We further assess ownership verification effectiveness. Specifically, we train multiple independent models and surrogate models to examine whether ownership can be reliably verified against all suspicious models. Following the strictest model extraction setting~\ref{exp:practical-mea}, we first deploy the corresponding defense mechanism on the target model, then perform ME attacks on the defense model to obtain surrogate models. In parallel, we construct independent models under entirely different training configurations. As summarized in Table~\ref{tab:verification}, our experiments on CIFAR-10 show that CREDIT consistently outperforms all baseline methods, clearly demonstrating its superior effectiveness in ownership verification.

\begin{figure*}[htb]
\vspace{1em}
  \centering
  \begin{subfigure}{0.31\textwidth}
    \centering
    \includegraphics[width=\linewidth]{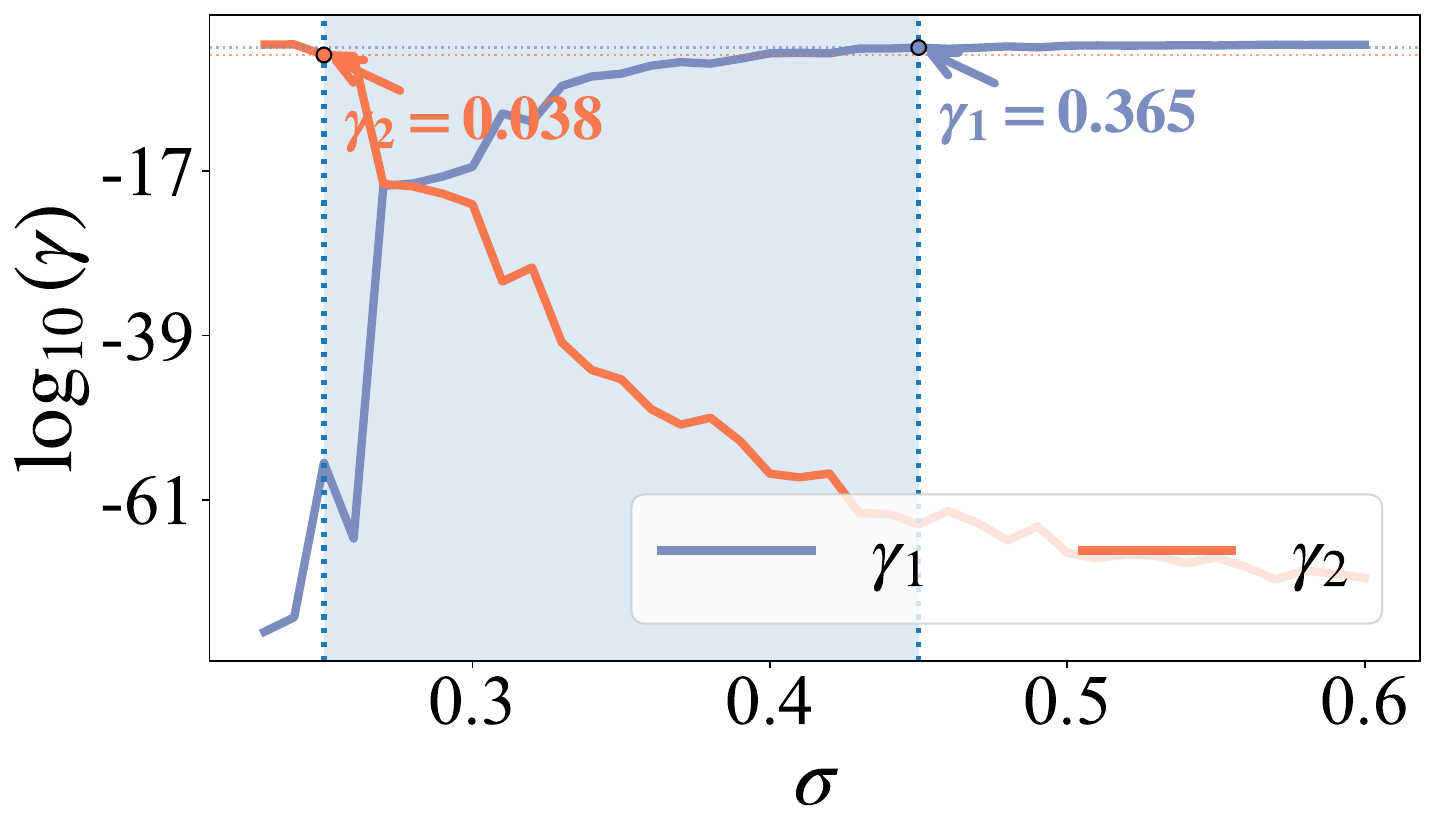}
    \caption{The impact of $\sigma$ on Independent False Alarm and Surrogate Missed Detection.}
    \label{fig:sigma-gamma}
  \end{subfigure}
  \hfill
  \begin{subfigure}{0.31\textwidth}
    \centering
    \includegraphics[width=\linewidth]{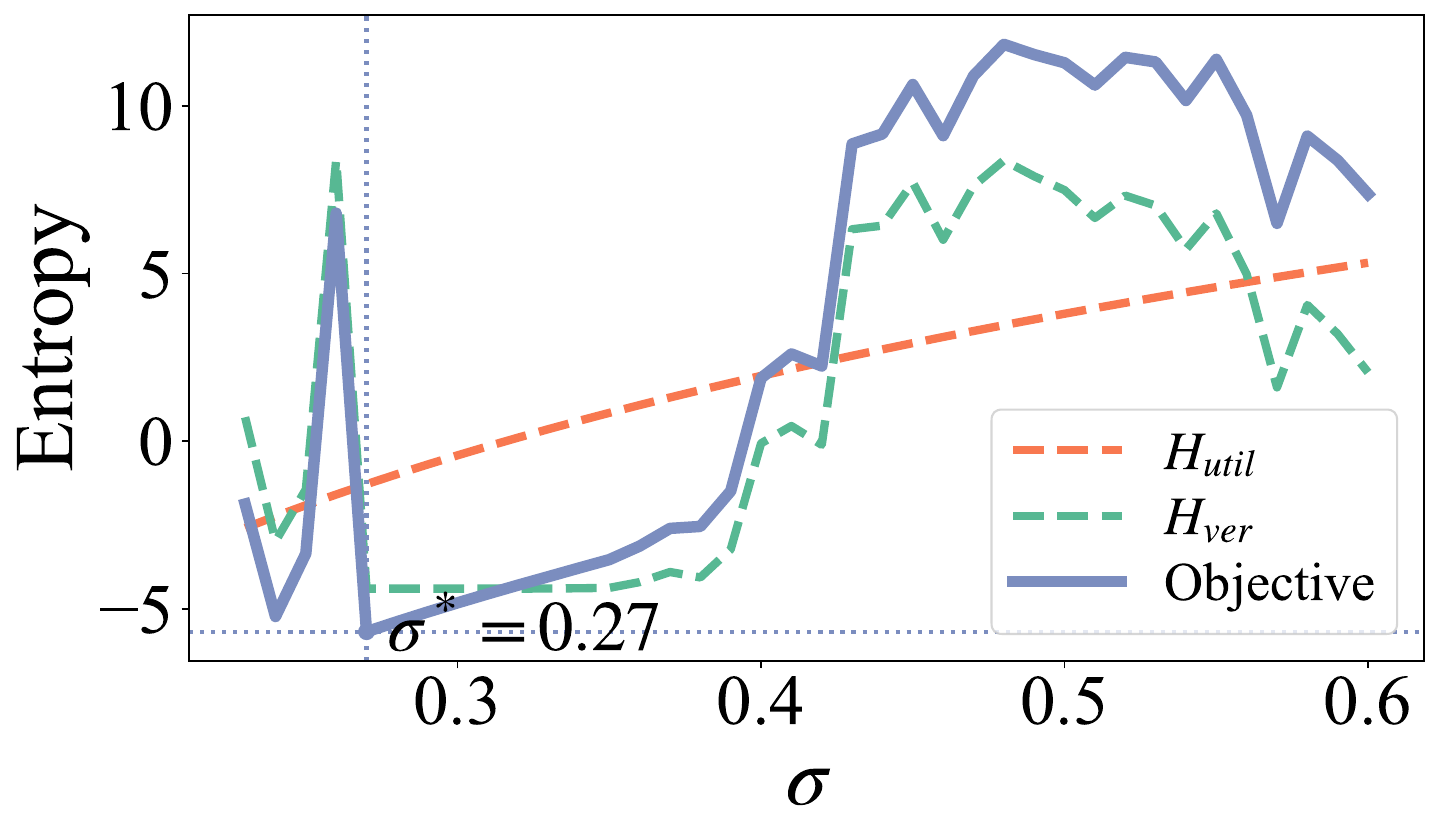}
    \caption{Optimizing $\sigma$ through the trade-off between utility and verification effectiveness.}
    \label{fig:sigma-optimal}
  \end{subfigure}
  \hfill
  \begin{subfigure}{0.31\textwidth}
    \centering
    \includegraphics[width=\linewidth]{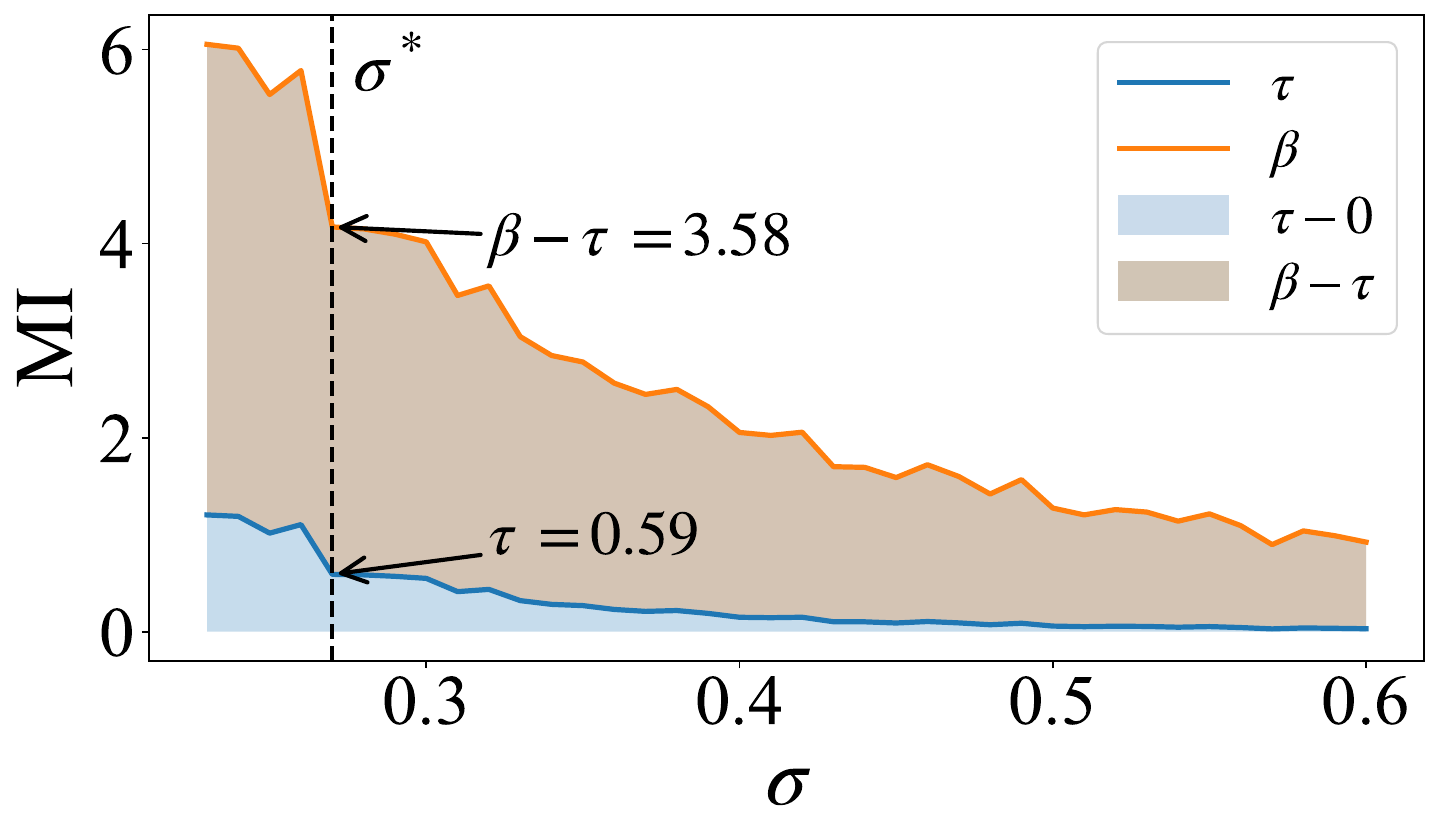}
    \caption{The effect of varying $\sigma$ on threshold $\tau$ and mutual information upper-bound $\beta$.}
    \label{fig:sigma-tau-beta}
  \end{subfigure}

  \caption{Analysis of the reliability of certification under different choices of $\sigma$.}
  \label{fig:main}
\end{figure*}

\subsection{Evaluation of Model Efficiency}
We next evaluate the efficiency of CREDIT. We divide the entire ownership verification process of each defense model into two stages. The first stage is the {preparation} stage, which measures the time required for each model to execute its corresponding defense mechanism. For baseline methods, this corresponds to constructing watermarks or fingerprints, while for CREDIT, it only involves computing the CREDIT threshold. The second stage is the verification stage, which measures the time required to perform a complete ownership verification. For baselines, regardless of whether they rely on watermarks or fingerprints, this requires training auxiliary models to support the verification process, then querying the suspicious model, comparing the responses with the watermark or fingerprint, and finally combining the auxiliary model’s results to reach a decision.
In contrast, CREDIT only requires estimating the mutual information between the embeddings of the suspicious model and the defense model during inference, followed by a direct comparison with the CREDIT threshold to obtain the final decision. As shown in Table~\ref{tab:efficiency}, CREDIT demonstrates outstanding time efficiency. Since the {preparation} stage only involves computing $\tau$, it incurs minimal overhead. Moreover, during verification, CREDIT requires no auxiliary model training, giving it a significant advantage over all existing baselines. 
Additionally, as the backbone model size increases, baseline methods suffer from additional computational costs, whereas CREDIT remains unaffected, as it only relies on estimating mutual information between DNNs. These results highlight the  efficiency of CREDIT.
% Additionally, as backbone model size increases, baseline methods incur higher computational costs, whereas CREDIT remains unaffected since it only estimates mutual information between DNNs, highlighting its superior efficiency.

% \setlength{\intextsep}{5pt}   % 上下间距
% \setlength{\columnsep}{10pt}  % 左右间距
\begin{table} % r=右侧, l=左侧, 0.6\linewidth=宽度
  \centering
  \scriptsize
  \setlength{\tabcolsep}{2pt}
  \caption{Efficiency of {ownership verification} methods in both the preparation and verification stages across backbone models with different parameter sizes. Results are reported in seconds, with the best performance highlighted in \textbf{bold}.}
  \label{tab:efficiency}
  \begin{tabular}{lcc cc}
    \toprule
    \multirow{2}{*}{Method} & \multicolumn{2}{c}{VGG-16 (15M)} & \multicolumn{2}{c}{ResNet-50 (25.6M)} \\
    \cmidrule(lr){2-3} \cmidrule(lr){4-5}
    & {Preparation}(s) & Verification(s) & {Preparation}(s) & Verification(s) \\
    \midrule
    Backdooring          & 0.594 $\pm$ 0.11 & 83.10 $\pm$ 0.85 & 0.399 $\pm$ 0.09 & 188.8 $\pm$ 0.40\\
    EWE                  & 0.279 $\pm$ 0.00 & 73.40 $\pm$ 0.13 & 0.199 $\pm$ 0.02 & 189.0 $\pm$ 0.76\\
    IPGuard              & 1.783 $\pm$ 0.00 & 73.94 $\pm$ 0.23 & 4.319 $\pm$ 0.00 & 189.0 $\pm$ 0.41\\
    UAP                  & 5.587 $\pm$ 0.13 & 72.74 $\pm$ 0.08 & 5.722 $\pm$ 0.14 & 180.3 $\pm$ 0.32\\
    \midrule
    \textbf{Our}         & $\mathbf{0.001 \pm 0.00}$ & $\mathbf{22.23 \pm 0.07}$ & $\mathbf{0.001 \pm 0.00}$  & $\mathbf{22.62 \pm 0.36}$\\
    \bottomrule
  \end{tabular}
\end{table}

\subsection{Certification Reliability: Impact of Parameter Choices}
Finally, we analyze the reliability of certification under different parameter settings. The parameter $\sigma$ plays a central role in our defense model, as it directly determines the upper bound of mutual information between DNNs. At the same time, it fixes the CREDIT threshold $\tau$ for ownership verification, from which we can also derive rigorous guarantees on the two types of error probabilities.
In particular, by varying $\sigma$ and computing the two error probabilities, we observe in Figure~\ref{fig:sigma-gamma} that as $\sigma$ increases, $\gamma_{1}$ rises, indicating a higher probability of Independent False Alarm, while $\gamma_{2}$ decreases, corresponding to a lower probability of Surrogate Missed Detection. We further identify a reasonable tolerance range, where $\sigma$ between 0.25 and 0.45 yields acceptable performance. Within this range, the probability of Surrogate Missed Detection persists at a low rate, ensuring stricter and more reliable detection of surrogate models.
As discussed in Section~\ref{sec:sigma-tradeoff}, we formulate utility and verification effectiveness as a joint optimization problem. By optimizing this objective, we obtain the theoretically optimal value of $\sigma$. As illustrated in Figure~\ref{fig:sigma-optimal}, this optimum occurs at $\sigma = 0.27$.
Once $\sigma$ is fixed, both the upper bound $\beta$ and the CREDIT threshold $\tau$ are uniquely determined, as shown in Figure~\ref{fig:sigma-tau-beta}. 
% In this case, if the mutual information between a suspicious model and our model falls between $\tau$ and $\beta$, the model is identified as a surrogate; if it lies between 0 and $\tau$, it is identified as independent.
In this case, the addition of noise ensures that the mutual information with an independent model remains negligible, while any suspicious model whose mutual information falls within the broad interval between $\tau$ and $\beta$ can be confidently identified as a surrogate.
%
% These experiments provide concrete examples of how CREDIT can be applied in practice, demonstrating its practical utility.

\section{Related Work}
\noindent\textbf{Model Extraction Attacks \& Defenses.}
% attacks: Knock off, ADBA, cost-effective
% defenses: fingerprinting, watermarking, perturbating
% Different from xxxxx (all of the above xxxs), our approach + advantage/novelty/unqiue contribution/insights???
Model Extraction Attacks (MEAs) and Model Extraction Defenses (MEDs)~\citep{zhao2025survey,zhao2025systematic,cheng2025atom,wangcega,cheng2025misleader} have recently attracted significant attention, driven by the increasing popularity of the MLaaS deployment paradigm. MEAs demonstrate the inherent vulnerability of MLaaS~\citep{li2024comprehensive}, as adversaries can often obtain functionality-equivalent models at very low cost. For example, Knockoff~\citep{orekondy2019knockoff} proposed a two-step procedure in which the adversary first queries the target model and then trains a knockoff model on the collected input–output pairs, achieving reasonable performance with minimal expense. ADBA~\citep{wang2025adba} introduced Approximation Decision Boundary Analysis to effectively attack target models by exploiting decision boundary information. In the graph domain, CEGA~\citep{wangcega} showed that selecting informative nodes can drastically reduce the query budget while enabling cost-effective model extraction with strong performance.
On the defense side, MEDs have also been extensively studied. Fingerprinting methods~\citep{cao2021ipguard,peng2022fingerprinting,you2024gnnfingers,waheed2024grove} aim to identify intrinsic model characteristics and verify ownership by checking whether a suspicious model exhibits these characteristics. Watermarking methods~\citep{jia2021entangled,adi2018turning,zhao2021watermarking,xu2023watermarking} instead embed specially designed watermark samples into the target model, such that ownership can be claimed if these samples can be reliably verified on a suspicious model. Perturbation-based defenses~\citep{lee2018defending,kariyappa2020defending,orekondy2019prediction} adopt yet another strategy, injecting noise into outputs to prevent adversaries from extracting usable surrogate models.
Different from all the above approaches, our work is the first to introduce a certified ownership verification against MEAs, providing theoretical guarantees for practical ownership verification.

\noindent\textbf{Certified Defense for Deep Learning Models.}
% robustness, unlearning, IPCert
% Different from xxxxx (all of the above xxxs), our approach + advantage/novelty/unqiue contribution/insights???  \neil{relevant paper: IPCert: Provably Robust Intellectual Property Protection for Machine Learning}
Certified defenses for deep neural networks (DNNs) have been extensively studied, owing to their theoretical guarantees and practical significance. In robustness~\citep{zheng2016improving,katz2017towards}, the goal is to analyze how well DNNs retain effectiveness when subjected to perturbations. Several works have addressed this direction: one line of research~\citep{cohen2019certified} leverages randomized smoothing to obtain certified accuracies, while another~\citep{lecuyer2019certified} achieves certified robustness against adversarial examples through differential privacy.
Beyond robustness, certified unlearning~\citep{bourtoule2021machine,nguyen2025survey} has emerged as an important topic as privacy concerns continue to grow and users increasingly demand the removal of specific data from models. For instance, one work~\citep{dong2024idea} proposed flexible and certified unlearning for graph neural networks (GNNs)~\citep{kipf2016semi,velivckovic2017graph,hamilton2017inductive}, addressing four different types of unlearning requests with rigorous theoretical guarantees. Another~\citep{zhang2024towards} introduced certified unlearning methods for DNNs, which are applicable to nonconvex objectives and enable efficient computation through inverse Hessian approximation.
Certification has also been extended to intellectual property protection~\citep{xue2021intellectual,sun2023deep,zhang2018protecting}. IPCert~\citep{jiang2023ipcert} proposed turning existing watermarking and fingerprinting schemes into provably robust methods against model perturbations via randomized smoothing.
Distinct from all of the above, our work is the first to propose a certified ownership verification against Model Extraction Attacks (MEAs), thereby addressing a critical gap in ensuring provable protection for DNNs under extraction threats.

\section{Conclusion}
% In this work, we introduced CREDIT, the first certified ownership verification against Model Extraction Attacks. We formally formulated the problem of certified defense in this setting, emphasizing the necessity of theoretical guarantees for practical deployment. To bridge the gap in defending DNNs against MEAs, CREDIT employs mutual information to systematically quantify the relationship between models and establishes a threshold-based criterion for ownership verification. We further provided rigorous mathematical guarantees on the error bounds associated with this threshold. Extensive experiments across diverse datasets and modalities demonstrate that CREDIT consistently achieves state-of-the-art performance, highlighting both the effectiveness and generality of our approach.

In this work, we proposed CREDIT, the first framework for certified ownership verification against Model Extraction Attacks, and formally established ownership verification under MEAs as a problem that fundamentally requires explicit theoretical guarantees. CREDIT employs mutual information as a principled and model-agnostic measure to quantify dependency between a protected model and a suspect model, and derives a theoretically grounded verification threshold with rigorous probabilistic bounds on both false positive and false negative. Extensive empirical evaluations across diverse datasets, modalities, and backbone architectures demonstrate that CREDIT consistently outperforms existing baselines, achieving state-of-the-art verification performance while maintaining strong generalization, thereby bridging the gap between provable security guarantees and practical deployment in MLaaS settings.

% In the unusual situation where you want a paper to appear in the
% references without citing it in the main text, use \nocite
% \nocite{langley00}
% \clearpage
\section{Impact Statements}
\noindent\textbf{Ethical Aspects.}
CREDIT promotes ethical use of Machine Learning as a Service by enabling principled and verifiable ownership verification, which helps deter unauthorized model extraction while respecting legitimate access to models through standardized APIs, thereby supporting fair attribution and responsible deployment of proprietary machine learning systems.

\noindent\textbf{Future Societal Consequences.}
By providing certified ownership verification with rigorous theoretical guarantees, CREDIT has the potential to increase trust in shared machine learning infrastructures, encourage secure model sharing and commercialization, and contribute to a more sustainable ecosystem for developing and deploying machine learning models at scale.

\bibliography{references}
\bibliographystyle{icml2026}

%%%%%%%%%%%%%%%%%%%%%%%%%%%%%%%%%%%%%%%%%%%%%%%%%%%%%%%%%%%%%%%%%%%%%%%%%%%%%%%
%%%%%%%%%%%%%%%%%%%%%%%%%%%%%%%%%%%%%%%%%%%%%%%%%%%%%%%%%%%%%%%%%%%%%%%%%%%%%%%
% APPENDIX
%%%%%%%%%%%%%%%%%%%%%%%%%%%%%%%%%%%%%%%%%%%%%%%%%%%%%%%%%%%%%%%%%%%%%%%%%%%%%%%
%%%%%%%%%%%%%%%%%%%%%%%%%%%%%%%%%%%%%%%%%%%%%%%%%%%%%%%%%%%%%%%%%%%%%%%%%%%%%%%
\newpage
\appendix
\onecolumn
\section{Supplementary Technical Details}
\subsection{Mutual Information Estimator}
\label{appendix:mi-estimator}
In practice, computing mutual information for high-dimensional embeddings in deep neural networks requires a statistical estimator. 
We adopt the KSG estimator~\citep{kraskov2004estimating}, which relies on nearest-neighbor statistics to approximate the underlying probability densities:
\begin{equation}
\widehat I_{\mathrm{KSG}}
=
\psi(k)
-
\frac{1}{n}\sum_{i=1}^{n}\!\Bigl[\psi\bigl(n_x(i)+1\bigr)+\psi\bigl(n_y(i)+1\bigr)\Bigr]
+
\psi(n).
\label{eq:ksg}
\end{equation}
Here \(k\) is the fixed neighborhood size, \(\psi\) is the digamma function, and \(n_x(i)\) (resp. \(n_y(i)\)) counts the number of samples, excluding \(i\), whose distance in the \(Z_1\) (resp. \(Z_2\)) space is within the joint \(k\)-NN radius \(\varepsilon_i\). 
Intuitively, the estimator measures how often neighbors in the joint space \((Z_1,Z_2)\) also remain neighbors when projected to the marginal spaces, thereby capturing statistical dependence without explicitly estimating densities. 
We further assume embeddings are bounded, i.e., \(\|Z_{1,i}\|_2 \le B_1\) and \(\|Z_{2,i}\|_2 \le B_2\), ensuring that distances and neighbor counts are well-defined.

\subsection{Derivation of Utility Entropy}
\label{appendix:utility-entropy-gain}

We provide the derivation of the Gaussian surrogate entropy used in the main content.  
Consider a random variable $X \in \mathbb{R}^d$ with empirical covariance $\Sigma \in \mathbb{R}^{d\times d}$. Among all continuous distributions with covariance $\Sigma$, the Gaussian distribution maximizes the differential entropy. Therefore, it is natural to approximate the empirical distribution of the embeddings by a zero mean Gaussian $\mathcal N(0,\Sigma)$. For a Gaussian random vector $G \sim \mathcal N(0,\Sigma)$, the differential entropy is given by
\[
H(G) \;=\; \tfrac{1}{2} \log \bigl( (2\pi e)^d \det(\Sigma) \bigr)
= \tfrac{d}{2}\log(2\pi e) + \tfrac{1}{2}\log \det(\Sigma).
\]

When isotropic Gaussian noise $Z \sim \mathcal N(0,\sigma^2 I)$ is added to $X$, the resulting distribution is $\mathcal N(0, \Sigma+\sigma^2 I)$. Its entropy is
\[
H_{\mathrm{G}}(X+Z) \;=\; \tfrac{1}{2} \log \bigl( (2\pi e)^d \det(\Sigma + \sigma^2 I) \bigr).
\]

Subtracting the clean entropy $H_{\mathrm{G}}(X)$ yields
\[
\Delta H_{\mathrm{util}}(\sigma) 
= H_{\mathrm{G}}(X+Z) - H_{\mathrm{G}}(X) 
= \tfrac{1}{2} \log \det\!\Bigl(I + \tfrac{\sigma^2 I}{\Sigma}\Bigr).
\]

This expression quantifies the increase in entropy introduced by Gaussian perturbation. Since higher entropy indicates less structure and more randomness, a smaller $\Delta H_{\mathrm{util}}(\sigma)$ corresponds to better preservation of the informative content of the original embeddings.

\subsection{Derivation of Verification Entropy}
\label{appendix:verification-entropy-gain}

% We provide the detailed derivation of the verification entropy introduced in the main content.

\textbf{Hypothesis testing setup.}
The verification problem is cast as a binary hypothesis test:  
\[
H = 
\begin{cases}
\mathrm{ind}, & \text{if the queried model is independent of the protected model}, \\
\mathrm{sur}, & \text{if the queried model is a surrogate trained with at most $Q$ queries}.
\end{cases}
\]  
Given a sample set $\mathcal S$ with $|\mathcal S|$ queries, the verifier computes the empirical mutual information statistic $\widehat I$. The indicator is  
\[
T_\sigma = \mathbf{1}\{\widehat I > \tau(\sigma, Q)\},
\]  
where $T_\sigma$ is a binary random variable indicating the verifier’s choice, and $\tau(\sigma,Q)$ is a threshold that may depend on the noise parameter $\sigma$ and query budget $Q$.

\textbf{Error probabilities.}
We have two types of error,
\textit{Independent False Alarm (Type I error)}: $\Pr[T_\sigma=1 \mid H=\mathrm{ind}]$, i.e., incorrectly declaring the model a surrogate.  
\textit{Surrogate Missed Detection (Type II error)}: $\Pr[T_\sigma=0 \mid H=\mathrm{sur}]$, i.e., failing to detect a surrogate. By concentration bounds for empirical mutual information, these probabilities can be bounded as
\[
\Pr[T_\sigma=1 \mid H=\mathrm{ind}] \le 
\gamma_1(\sigma, Q) 
= \exp\!\Bigl(-\tfrac{2|\mathcal S|\, \tau(\sigma, Q)^2}{C^2}\Bigr),
\]
\[
\Pr[T_\sigma=0 \mid H=\mathrm{sur}] \le 
\gamma_2(\sigma, Q) 
= \exp\!\Bigl(-\tfrac{2|\mathcal S|\, \bigl(I_\ast(\sigma, Q) - \tau(\sigma, Q)\bigr)^2}{C^2}\Bigr),
\]
where $C$ is a bound on the range of the score function and $I_\ast(\sigma, Q)$ is the population mutual information under the surrogate model.

\textbf{Verification entropy.}
Let $\pi_0=\Pr[H=\mathrm{ind}]$ and $\pi_1=\Pr[H=\mathrm{sur}]$ denote the prior probabilities. The conditional entropy of the decision $T_\sigma$ given $H$ is
\[
\mathcal{H}_{\mathrm{ver}}(\sigma) 
= \pi_0 \, h_b\!\bigl(\Pr[T_\sigma=1 \mid H=\mathrm{ind}]\bigr)
+ \pi_1 \, h_b\!\bigl(\Pr[T_\sigma=0 \mid H=\mathrm{sur}]\bigr),
\]

\[
\mathcal{H}_{\mathrm{ver}}(\sigma) 
= \pi_0 \, h_b\!\bigl(\gamma_1(\sigma,Q)\bigr)
+ \pi_1 \, h_b\!\bigl(\gamma_2(\sigma,Q)\bigr),
\]

where $h_b(p)=-p\log p-(1-p)\log(1-p)$ is the binary entropy function. In practice, if no prior knowledge about $H$ is available, it is common to assume a uniform prior $\pi_0=\pi_1=0.5$. Under the noiseless setting ($\sigma=0$), the test achieves perfect accuracy, so $\mathcal{H}_{\mathrm{ver}}(0)=0$. The \emph{verification entropy gain} is then
\[
\Delta H_{\mathrm{ver}}(\sigma) = \mathcal{H}_{\mathrm{ver}}(\sigma).
\]

This formulation makes clear that the entropy grows as error probabilities increase, quantifying the loss of verification robustness introduced by Gaussian perturbation.

\section{Proofs}
\subsection{Proof of Theorem 1}
\label{proof:mi-upperbound}

% \begin{corollary}[Simplified asymptotic bound]
% \label{cor:asymptotic}
% Under the assumptions of Theorem~\ref{thm:certification}, for $\varepsilon\in(0,1]$ and $\delta\in(0,e^{-1})$ we have the simplified upper bound
% \[
%   I\bigl(e_{h}(X); e_{g}(X)\bigr)
%   \;\le\; \tfrac12 \varepsilon^{2} + \varepsilon \log \tfrac{1}{\delta},
% \]
% which follows from $\beta(\varepsilon,\delta)$ by applying $e^{\varepsilon}-1 \le \varepsilon + \tfrac12 \varepsilon^{2}$ and absorbing the remaining lower order terms into $\varepsilon \log(1/\delta)$ for small $\varepsilon,\delta$.
% \end{corollary}

% \begin{definition}[Gaussian mechanism \(e_g\)]
% \label{def:gm}
% Let \(f : \mathcal D \to \mathbb R^{d}\) be a query with global \(\ell_{2}\) sensitivity
% \(
%   \Delta_{2}f \;=\; \sup_{X,X' \in \mathcal D \text{ adj}} 
%   \lVert f(X) - f(X') \rVert_{2}.
% \)
% For privacy parameters \(\epsilon \in (0,1)\) and \(\delta \in (0,1)\), set
% \(
%   \sigma
%   \;=\;
%   \frac{\sqrt{2\,\ln\!\bigl(1.25/\delta\bigr)}\,\Delta_{2}f}{\epsilon}.
% \)
% Define the mechanism
% \[
%   e_g(X) \;=\; f(X) + Z,
% \qquad
%   Z \sim \mathcal N\!\bigl(0,\,\sigma^{2} I_{d}\bigr).
% \]
% Then \(e_g\) satisfies \((\epsilon,\delta)\)-differential privacy.
% \end{definition}

\begin{definition}[Gaussian Mechanism]
\label{appendix:gaussian-mechanism-definition}
Let \(f : \mathcal D \to \mathbb R^{d}\) be a query and let \(\sigma > 0\) be a noise scale parameter. The \emph{Gaussian mechanism} \(e_g\) is defined as
\[
 e_g(X) \;=\; f(X) + Z,
\qquad \text{where } Z \sim \mathcal N\!\bigl(0,\,\sigma^{2} I_{d}\bigr).
\]
This mechanism provides differential privacy based on the global \(\ell_2\) sensitivity of the query, defined as
\(
 \Delta_2 = \sup_{x,x'} \|f(x)-f(x')\|_2
\)
To ensure that \(e_g\) satisfies \((\epsilon,\delta)\)-differential privacy for given \(\epsilon \in (0,1)\) and \(\delta \in (0,1)\), the noise scale \(\sigma\) must be chosen to meet the following condition:
\[
 \sigma \;\ge\; \frac{\sqrt{2\,\ln\!\bigl(1.25/\delta\bigr)}\,\Delta_{2}}{\epsilon}.
\]
\end{definition}

\begin{theorem}[Mutual Information Bound for the Gaussian Mechanism]
\label{thm:mi-bound-kl}
Let \(f: \mathcal{X} \to \mathbb{R}^d\) be a function with global \(\ell_2\) sensitivity \(\Delta_2\). Consider the Gaussian mechanism \(e_g(X) = f(X) + Z\), where the noise \(Z \sim \mathcal{N}(0, \sigma^2 I_d)\) is independent of the input \(X\). The mutual information between the input \(X\) and the output \(e_g(X)\) is bounded as follows:
\[
I(X; e_g(X)) \le \frac{\Delta^2}{2\sigma^2}n.
\]
\end{theorem}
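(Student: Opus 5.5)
We need to bound $I(X;e_g(X))$ where $X=(X_1,\dots,X_n)$ has i.i.d. entries and the mechanism acts coordinatewise, $e_g(X)=(f(X_1)+Z_1,\dots,f(X_n)+Z_n)$ with independent noises $Z_i\sim\mathcal N(0,\sigma^2 I_d)$. The plan is to bound a single coordinate by a KL-divergence (golden-formula) argument and then add up over the $n$ entries.

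\textbf{Single coordinate.} For one entry $X_1$, write $Y_1=f(X_1)+Z_1$. I will use the variational characterization
\[
I(X_1;Y_1)=\min_{Q} \mathbb{E}_{X_1}\bigl[D_{\mathrm{KL}}(P_{Y_1\mid X_1}\,\|\,Q)\bigr],
\]
which gives $I(X_1;Y_1)\le \mathbb{E}_{X_1}[D_{\mathrm{KL}}(P_{Y_1\mid X_1}\|Q)]$ for any fixed reference $Q$. Fix any reference point $x_0$ and choose $Q=\mathcal N(f(x_0),\sigma^2 I_d)$. Since $P_{Y_1\mid X_1=x}=\mathcal N(f(x),\sigma^2 I_d)$, the Gaussian KL formula yields
\[
D_{\mathrm{KL}}=\frac{\|f(x)-f(x_0)\|_2^2}{2\sigma^2}\le \frac{\Delta^2}{2\sigma^2}.
\]
This uses the global sensitivity, read as a supremum over all pairs of inputs, exactly as in the paper's definition $\sup_{x,x'}$. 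Hence $I(X_1;Y_1)\le \Delta^2/(2\sigma^2)$. Alternatively, one can take $Q=P_{Y_1}$ and apply convexity of KL in its second argument, which gives the same bound.

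\textbf{Tensorization.} Because the pairs $(X_i,Z_i)$ are mutually independent across $i$, the pairs $(X_i,Y_i)$ are independent. Mutual information is therefore additive:
\[
I(X;e_g(X))=\sum_{i=1}^n I(X_i;Y_i)\le n\,\frac{\Delta^2}{2\sigma^2}.
\]
If only chain-rule subadditivity is wanted, I will use $I(X;Y)\le\sum_i I(X_i;Y_i)$. This holds because the channel is memoryless and the inputs are independent: $H(Y\mid X)=\sum_i H(Y_i\mid X_i)$ and $H(Y)\le\sum_i H(Y_i)$.

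\textbf{Main obstacle.} The main difficulty is interpretive rather than computational. The statement does not specify how $e_g$ acts on the collection $X$, and "neighboring" is ambiguous. I will fix the coordinatewise, independent-noise reading and the all-pairs sensitivity. Under a neighboring-only reading of $\Delta$, the KL step would need a path or triangle argument and would fail in general, so I will state the all-pairs interpretation explicitly.

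Theorem~\ref{thm:mi-bound-sigma} then follows from this result via the data-processing inequality, using the Markov chain $e(X)-X-e_g(X)$. This chain is justified by the assumption $e(X)\perp\!\!\!\perp Z\mid X$.
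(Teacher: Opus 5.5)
Your argument is correct under the reading you fix: each entry is released as $f(X_i)+Z_i$ with fresh independent noise, and sensitivity is a supremum over all pairs of single inputs. It is not, however, the formalization the paper relies on.

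The paper gives no written proof of this statement. In the main text it attributes the bound to Bun and Steinke's concentrated-DP analysis. There, $e_g$ is a single Gaussian mechanism applied to the whole collection $X=(X_1,\dots,X_n)$, and ``neighboring'' means datasets differing in one entry. Your Gaussian KL computation then shows $D_{\mathrm{KL}}\bigl(\mathcal N(f(x),\sigma^2 I_d)\,\|\,\mathcal N(f(x'),\sigma^2 I_d)\bigr)\le \Delta^2/(2\sigma^2)=\rho$ for neighbors, i.e.\ $\rho$-zCDP. The factor $n$ comes from the chain rule $I(X;e_g(X))=\sum_{i=1}^n I\bigl(X_i;e_g(X)\mid X_{<i}\bigr)$. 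Each term is at most $\rho$, by independence of the entries and joint convexity of KL: compare the output on $(x_{<i},x_i,X_{>i})$ with the output on $(x_{<i},x_i',X_{>i})$, which are neighbors.

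Your route is elementary and self-contained, needs no DP machinery, and matches how CREDIT actually perturbs embeddings (fresh noise per query). What it gives up is generality. Your tensorization step uses the product structure of the channel, so it does not cover the literal form of the statement, where a single $Z\sim\mathcal N(0,\sigma^2 I_d)$ is added to one statistic of the whole collection. The chain-rule argument handles any dataset-level query and contains your case: the stacked map $(f(X_1),\dots,f(X_n))$ has replace-one sensitivity exactly equal to your all-pairs $\Delta$.

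This also resolves your worry about a neighboring-only reading. With replace-one neighbors the replaced entry may take any value, so no path argument is ever needed.

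Finally, independence of the $Z_i$ is essential in your formalization, and you were right to impose it. If the same $Z$ were added to every $f(X_i)$, the differences $f(X_i)-f(X_1)$ would be released without noise. The bound would then fail, for example for binary entries with small $\Delta/\sigma$.
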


\begin{definition}[Data Processing Inequality, DPI]
\label{def:dpi}
For any random variables forming a Markov chain $X \to Y \to Z$, which implies that $X$ and $Z$ are conditionally independent given $Y$, the mutual information satisfies:
\[
I(X; Z) \le I(X; Y).
\]
\end{definition}

\begin{lemma}[Markov Chain from a Common Ancestor]
\label{lem:markov-chain}
Let $X$ be a random variable. Let $Y = e_h(X)$ and $Z = e_g(X)$ be two random variables generated from $X$ via two (possibly randomized) functions, $e_h$ and $e_g$. Then, the random variables $Y, X, Z$ form a Markov chain $Y \to X \to Z$.
\end{lemma}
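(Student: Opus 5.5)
The plan is to prove the claim directly from the definition of a Markov chain, i.e. to show that $Y$ and $Z$ are conditionally independent given $X$. The hypothesis "generated from $X$ via two (possibly randomized) functions" is read the same way as in Theorem~\ref{thm:mi-bound-sigma}. There is a source of randomness $R_h$ with $Y = e_h(X;R_h)$ and a source $R_g$ with $Z = e_g(X;R_g)$. For the Gaussian mechanism, $R_g$ is simply the noise $Z_{\mathrm{noise}}\sim\mathcal N(0,\sigma^2 I_d)$. The randomness of the two models is independent given the input, i.e. $R_h \perp\!\!\!\perp R_g \mid X$; this is exactly the condition $e(X)\perp\!\!\!\perp Z\mid X$ assumed in Theorem~\ref{thm:mi-bound-sigma}. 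I will make this assumption explicit at the start, since the lemma is false without it: if $R_h=R_g$ were shared, $Y$ and $Z$ could be dependent beyond $X$.

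The key step is the factorization of the joint law. Conditioning on $X=x$, the variable $Y$ is a measurable function of $(x,R_h)$ and $Z$ is a measurable function of $(x,R_g)$. Because $R_h$ and $R_g$ are conditionally independent given $X=x$, and measurable functions of conditionally independent variables remain conditionally independent, we get
\[
P_{Y,Z\mid X=x}=P_{Y\mid X=x}\otimes P_{Z\mid X=x}
\]
for $P_X$-almost every $x$. Integrating against $P_X$ then gives
\[
p(x,y,z)=p(x)\,p(y\mid x)\,p(z\mid x),
\]
written in density or kernel form. This is the defining factorization of the chain $Y\to X\to Z$. The same factorization can be rewritten as $p(y)\,p(x\mid y)\,p(z\mid x)$, which also shows that $Z$ is independent of $Y$ given $X$ in the forward-chain sense required by Definition~\ref{def:dpi}.

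The main obstacle is measure-theoretic hygiene rather than any computation. The embeddings are continuous, so I will phrase the argument with regular conditional distributions (Markov kernels) rather than densities, using that $X$ takes values in a standard Borel space. The chain's symmetry, $Y\to X\to Z$ if and only if $Z\to X\to Y$, follows from the symmetric factorization above. The lemma then combines with Definition~\ref{def:dpi} to give $I(Y;Z)\le I(X;Z)$, and with Theorem~\ref{thm:mi-bound-kl} this yields Theorem~\ref{thm:mi-bound-sigma}.
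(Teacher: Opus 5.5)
Your proposal is correct and follows essentially the paper's route: both prove $P_{Y,Z\mid X=x}=P_{Y\mid X=x}\,P_{Z\mid X=x}$, on the grounds that once $X=x$ is fixed, $Y$ and $Z$ come from independent generation processes. The one difference is that you state as an explicit hypothesis that the two models' internal randomness is conditionally independent given $X$ (the analogue of $e(X)\perp\!\!\!\perp Z\mid X$ in Theorem~\ref{thm:mi-bound-sigma}), whereas the paper treats this as holding ``by definition'' even though, as you correctly point out, the lemma fails without it.
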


\begin{proof}[Proof of Lemma~\ref{lem:markov-chain}]
To prove that $Y \to X \to Z$ is a Markov chain, we must show that $Y$ and $Z$ are conditionally independent given $X$. That is, $P(Y, Z | X=x) = P(Y | X=x) P(Z | X=x)$ for all $x$. By definition, the generation of $Y$ depends only on $X$ (via mechanism $e_h$), and the generation of $Z$ depends only on $X$ (via mechanism $e_g$). Once the value of $X$ is fixed, the two generation processes are independent of each other. Therefore, their conditional joint probability factors into the product of their conditional marginal probabilities, establishing the Markov chain.
\end{proof}

\begin{corollary}[Mutual Information Bound Between Two Privatized Outputs]
\label{cor:final-bound}
Let $X$ be a random variable, and let $Y = e_h(X)$ and $Z = e_g(X)$. If the $e_g$ is the Gaussian mechanism, then the mutual information between $Y$ and $Z$ is bounded by:
\[
I(Y; Z) \le \beta.
\]
\end{corollary}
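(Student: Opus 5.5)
The argument chains Lemma~\ref{lem:markov-chain}, the Data Processing Inequality (Definition~\ref{def:dpi}), and Theorem~\ref{thm:mi-bound-kl}.

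\textbf{Step 1: Markov structure.} By Lemma~\ref{lem:markov-chain}, since $Y=e_h(X)$ and $Z=e_g(X)$ are both generated from $X$, we have the chain $Y \to X \to Z$. Concretely, any internal randomness of $e_h$ is independent of the Gaussian noise in $e_g$ given $X$, which is exactly the hypothesis $e(X)\perp\!\!\!\perp Z\mid X$ of Theorem~\ref{thm:mi-bound-sigma}.

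\textbf{Step 2: Data processing.} Applying Definition~\ref{def:dpi} to $Y\to X\to Z$, read as the chain $Z \to X \to Y$ (Markov chains are symmetric in their endpoints), gives
\[
I(Y;Z)\le I(X;Z)=I\bigl(X;e_g(X)\bigr).
\]

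\textbf{Step 3: Gaussian mechanism bound.} This step carries the real content. We show $I(X;e_g(X))\le \frac{\Delta^2}{2\sigma^2}n=\beta$, which is Theorem~\ref{thm:mi-bound-kl}. If it must be proved rather than cited, I would use the variational characterization
\[
I(X;e_g(X))=\min_Q \mathbb{E}_X\,\mathrm{KL}\bigl(P_{e_g(X)\mid X}\,\|\,Q\bigr).
\]
Choose $Q=\mathcal N(f(x_0),\sigma^2 I)$ for a fixed reference input $x_0$. For a single entry, the KL divergence between isotropic Gaussians is
\[
\frac{\|f(x)-f(x_0)\|_2^2}{2\sigma^2}\le \frac{\Delta^2}{2\sigma^2}.
\]
For $n$ independent entries with independent noise per entry, the chain rule (or tensorization of KL over product measures) adds these per-coordinate bounds, giving the factor $n$.

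\textbf{Main obstacle.} The delicate point is Step 3's handling of the $n$ entries and of the meaning of $\Delta$. The definition of $\Delta$ as a supremum over all pairs $x,x'$, not only neighbors, is what makes the single-sample KL bound valid for every $x$ in the support. I would first check that reading; if $\Delta$ is only a neighboring-input sensitivity, the bound would require bounded diameter of $f(\mathcal X)$ instead. With that settled, chaining Steps 2 and 3 yields $I(Y;Z)\le\beta$.
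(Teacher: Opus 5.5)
Your proposal is correct and follows the paper's own route: the Markov chain of Lemma~\ref{lem:markov-chain}, the data processing inequality to obtain $I(Y;Z)\le I(X;e_g(X))$, and then Theorem~\ref{thm:mi-bound-kl}. You also fill in two steps the paper leaves implicit, and both are correct: reading the chain as $Z\to X\to Y$, which is needed to land on $I(X;Z)$ rather than $I(Y;X)$ under the DPI as the paper states it, and the variational-KL proof of the Gaussian bound, which the paper only states.
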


\begin{proof}[Proof of Theorem~\ref{thm:mi-bound-sigma}]
From Lemma \ref{lem:markov-chain}, we have established that the random variables form the Markov chain $Y \to X \to Z$.

By the Data Processing Inequality (Definition \ref{def:dpi}), this Markov chain implies:
\[
I(e_h(X); e_g(X)) \le I(X; e_g(X)).
\]

From Theorem~\ref{thm:mi-bound-kl}, since $e_g$ is gaussian mechanism, we have the bound:
\[
I(X; e_g(X)) \le \beta.
\]

Combining these two inequalities yields the desired result in Theorem~\ref{thm:mi-bound-sigma}:
\[
I(e_h(X); e_g(X)) \le \beta.
\]
\end{proof}

\subsection{Proof of Theorem 2}
\label{proof:tightness}

% \begin{theorem}[Tightness (order-wise)]
% \label{thm:tight}
% Fix $\Delta_2>0$ and $\sigma>0$, and set
% \[
%   \beta \;\triangleq\; \frac{\Delta_2^{2}}{2\sigma^{2}}.
% \]
% There exist a distribution $P_X$ supported on two adjacent inputs, a function $f$ with global $\ell_2$ sensitivity $\Delta_2$, a Gaussian mechanism $e_g(x)=f(x)+Z$ with $Z\sim\mathcal N(0,\sigma^{2}I_d)$, and a model $h$ with embedding $e_h$ such that
% \[
%   I\bigl(e_h(X)\,;\,e_g(X)\bigr)
%   \;\ge\;
%   \frac{\beta}{4} \;-\; C\,\beta^{2},
%   \qquad \text{for some constant } C>0,
% \]
% and consequently
% \(
%   I(e_h(X);e_g(X))\ge c\,\beta - o(\beta)
% \)
% as $\beta\to 0$ for some universal constant $c>0$.
% This shows that the upper bound $I(e_h(X);e_g(X))\le \beta$ in Theorem~\ref{thm:mi-bound-sigma} is tight in order.
% \end{theorem}

\begin{proof}[Proof of Theorem~\ref{thm:tight}]
% \zhan{Remove "Step 1, 2, 3..., otherwise it seems to be a bit unprofessional."} 
\textbf{Construction.}
Let $\mathcal X=\{x_0,x_1\}$ with $P_X(x_0)=P_X(x_1)=\tfrac12$.
Define $f(x_0)=-\Delta_2/2$ and $f(x_1)=+\Delta_2/2$, so that $\|f(x_0)-f(x_1)\|_2=\Delta_2$.
Let $e_g(x)=f(x)+Z$ with $Z\sim\mathcal N(0,\sigma^{2})$ independent of $X$.
Choose $e_h(X)=X$, hence $I(e_h(X);e_g(X)) = I(X;e_g(X))$.

\textbf{Binary Gaussian channel form.}
Then
\[
  e_g(X) \sim
  \begin{cases}
    \mathcal N(-\Delta_2/2,\;\sigma^{2}) & \text{w.p. } \tfrac12,\\[2pt]
    \mathcal N(+\Delta_2/2,\;\sigma^{2}) & \text{w.p. } \tfrac12.
  \end{cases}
\]
Define the parameter $s \triangleq 1/\sigma^{2}$. Using the standard normalization, write
\(Y = f(X)+Z\) with $Z\sim\mathcal N(0,\sigma^{2})$, so by scaling,
\(
  \tilde Y = \sqrt{s}\,f(X) + N
\)
with $N\sim\mathcal N(0,1)$.

\textbf{Lower bound via the $I$--MMSE relation.}
For any real $X$, $I(X;\sqrt{t}\,X+N) = \tfrac12\int_{0}^{t}\mathrm{mmse}(u)\,du$~\cite{guo2005mutual}.
Take $t=s$ and note $\mathrm{Var}(f(X))=\Delta_2^{2}/4$ for our symmetric binary $X$.
It is known that $\mathrm{mmse}(u)=\mathrm{Var}(f(X)) - O(u)$ as $u\downarrow 0$; hence there exists $C_0>0$ with
\(
  \mathrm{mmse}(u)\ge \mathrm{Var}(f(X)) - C_0 u
\)
for all small $u$.
Thus
\[
  I(X;e_g(X))
  \;=\; \frac12\int_{0}^{s}\mathrm{mmse}(u)\,du
  \;\ge\;
  \frac12\int_{0}^{s}\left(\frac{\Delta_2^{2}}{4} - C_0 u\right)du
  \;=\;
  \frac{\Delta_2^{2}}{8}\,s \;-\; \frac{C_0}{4}\,s^{2}.
\]
Substitute $s=1/\sigma^{2}$ and $\beta=\Delta_2^{2}/(2\sigma^{2})$ to obtain
\[
  \frac{\Delta_2^{2}}{8}\,s
  = \frac{\Delta_2^{2}}{8}\cdot\frac{1}{\sigma^{2}}
  = \frac{1}{4}\,\frac{\Delta_2^{2}}{2\sigma^{2}}
  = \frac{\beta}{4},
\]
and
\[
  \frac{C_0}{4}\,s^{2}
  = \frac{C_0}{4}\cdot \frac{1}{\sigma^{4}}
  = \underbrace{\frac{C_0}{\Delta_2^{4}}}_{\triangleq C}\,
     \left( \frac{\Delta_2^{2}}{2\sigma^{2}} \right)^{\!2}
  = C\,\beta^{2}.
\]
Hence
\[
  I(X;e_g(X)) \;\ge\; \frac{\beta}{4} - C\,\beta^{2}.
\]

\textbf{Conclusion.}
Since $\beta\to 0$ implies $C\beta^{2}=o(1)$, we have
\(
  I(e_h(X);e_g(X))
  = I(X;e_g(X))
  \ge \tfrac{\beta}{4} - o(\beta).
\)
Therefore the upper bound $I(e_h(X);e_g(X))\le \beta$ is tight up to a universal constant factor, establishing order-wise tightness.
\end{proof}

\subsection{Proof of Theorem 3}
\label{proof:error-bound}

\begin{proof}[Proof of Theorem~\ref{thm:ownership-guarantee} (Type I Error)]
Let $n = |\mathcal S|$ and define
\[
\widehat I \triangleq \widehat I_{\mathrm{KSG}}\!\bigl(e_{h_{\mathrm{ind}}}(X), e_g(X)\bigr),
\qquad
\mu_{\mathrm{ind}} \triangleq \mathbb E[\widehat I].
\]

\textbf{Estimator.} We use the KSG estimator~\citep{kraskov2004estimating}:
\[
\widehat I_{\mathrm{KSG}}
=
\psi(k)
-
\frac{1}{n}\sum_{i=1}^{n}\!\Bigl[\psi\bigl(n_x(i)+1\bigr)+\psi\bigl(n_y(i)+1\bigr)\Bigr]
+
\psi(n),
\]
where $k$ is fixed, $\psi$ is the digamma function, $n_x(i)$ is the number of samples (excluding $i$) whose $Z_1$-distance to sample $i$ does not exceed the joint $k$-NN radius $\varepsilon_i$, and $n_y(i)$ is defined analogously in $Z_2$ space. Assume embeddings are bounded: $\|Z_{1,i}\|_2\le B_1$, $\|Z_{2,i}\|_2\le B_2$, so all distances and counts are well-defined.

\textbf{Bounded differences.} Replace a single sample $(Z_{1,r},Z_{2,r})$ by $(Z'_{1,r},Z'_{2,r})$. This can affect:
(i) the $r$-th summand itself,
(ii) any sample $j\ne r$ for which $r$ falls inside the ball of radius $\varepsilon_j$ in $Z_1$ or $Z_2$ (thus changing $n_x(j)$ or $n_y(j)$ by at most $1$).
For $k$-nearest neighbour type estimators, each point can belong to at most $k$ such neighbour sets in each marginal space, so at most $2k$ other summands change. Hence, no more than $(2k+1)$ summands are affected. Since $\psi$ is monotone and for $m\in[1,n]$ we have $|\psi(m_1)-\psi(m_2)|\le\log n$, each affected summand
\[
\psi(n_x(i)+1)+\psi(n_y(i)+1)
\]
changes by at most $2\log n$. Therefore the total change in the average is bounded by
\[
\bigl|\widehat I - \widehat I'\bigr|
\le \frac{(2k+1)\cdot 2\log n}{n}
= \frac{C_k}{n},
\qquad
C_k \triangleq 2(2k+1)\log n.
\]
Thus McDiarmid’s inequality~\citep{mcdiarmid1989method} applies with $c_i = C_k/n$ for all $i$.

\textbf{Applying McDiarmid (upper tail).}
For any $t>0$,
\[
\Pr\!\left(\widehat I - \mu_{\mathrm{ind}} \ge t\right)
\le
\exp\!\left(
-\frac{2t^2}{\sum_{i=1}^{n} (c_i)^2}
\right)
=
\exp\!\left(
-\frac{2n t^2}{C_k^2}
\right).
\]
Take $t = \tau(\sigma,Q)-\mu_{\mathrm{ind}}$. By design (independently trained $h_{\mathrm{ind}}$), $\mu_{\mathrm{ind}} < \tau(\sigma,Q)$, so $t>0$. Therefore
\[
\Pr\!\left[\widehat I - \mu_{\mathrm{ind}} > \tau(\sigma,Q)\right]
\le
\exp\!\left(
-\frac{2n \bigl(\tau(\sigma,Q)-\mu_{\mathrm{ind}}\bigr)^2}{C_k^2}
\right).
\]
In the ideal case $\mu_{\mathrm{ind}}\approx 0$ (true MI near zero and KSG is consistent), this becomes
\[
\Pr\!\left[\widehat I > \tau(\sigma,Q)\right]
\le
\exp\!\left(
-\frac{2n\,\tau(\sigma,Q)^2}{\bigl[2(2k+1)\log n\bigr]^2}
\right)
\triangleq \gamma_1,
\]
which is the stated Type I bound.
\end{proof}

\begin{proof}[Proof of Theorem~\ref{thm:ownership-guarantee} (Type II Error)]
Let $n = |\mathcal S|$ and define
\[
\widehat I \triangleq \widehat I_{\mathrm{KSG}}\!\bigl(e_{h_{\mathrm{sur}}}(X), e_g(X)\bigr),
\qquad
\mu_{\mathrm{sur}} \triangleq \mathbb E[\widehat I].
\]
We again use the KSG estimator with the same bounded embedding and i.i.d.\ sample assumptions as in the Type I proof.

\textbf{Bounded differences.}
The identical argument shows that replacing a single sample changes $\widehat I$ by at most $C_k/n$ with
\[
C_k = 2(2k+1)\log n.
\]

\textbf{Applying McDiarmid (lower tail).}
We want
\[
\Pr\!\left[\widehat I \le \tau(\sigma,Q)\right]
=
\Pr\!\left[\mu_{\mathrm{sur}} - \widehat I \ge \mu_{\mathrm{sur}} - \tau(\sigma,Q)\right].
\]
Assume the surrogate is sufficiently close to $g$ so that $\mu_{\mathrm{sur}} > \tau(\sigma,Q)$; set $t = \mu_{\mathrm{sur}} - \tau(\sigma,Q) > 0$. McDiarmid’s inequality yields
\[
\Pr\!\left[\widehat I \le \tau(\sigma,Q)\right]
\le
\exp\!\left(
-\frac{2n\,t^2}{C_k^2}
\right)
=
\exp\!\left(
-\frac{2n\bigl(\mu_{\mathrm{sur}}-\tau(\sigma,Q)\bigr)^2}{\bigl[2(2k+1)\log n\bigr]^2}
\right)
\triangleq \gamma_2.
\]
Because $\tau(\sigma,Q)\to \beta(\sigma,\delta)(1-\rho)$ as $n\to\infty$ and $\mu_{\mathrm{sur}}$ stays above this limit by a positive margin, the exponent diverges to $-\infty$, hence $\gamma_2\to 0$.
\end{proof}

\section{Reproducibility}

\subsection{Dataset Statistics}
\label{appendix:dataset-stats}

This section provides an overview of the datasets used across different modalities. For the image classification tasks, we employ most commonly used CIFAR-10 and CIFAR-100, with detailed statistics summarized in Table~\ref{appendix:tab:cv-stats}.

\begin{table}[ht]
\centering
\footnotesize
\renewcommand{\arraystretch}{1.1}
\setlength{\tabcolsep}{5pt}
\caption{Statistics of commonly used real-world image classification datasets.}
\label{appendix:tab:cv-stats}
\begin{tabular}{lccccp{4.5cm}}
\toprule
Dataset   & Classes & \#Images & Train / Test Split & Resolution \\
\midrule
CIFAR-10  & 10   & 60,000   & 50,000 / 10,000 & $32\times32$ RGB \\
CIFAR-100 & 100  & 60,000   & 50,000 / 10,000 & $32\times32$ RGB \\
\bottomrule
\end{tabular}
\end{table}

For the graph classification tasks, we employ most commonly used ENZYMES and PROTEINS datasets, with their corresponding statistics reported in Table~\ref{appendix:tab:graph-stats}.

\begin{table}[ht]
\centering
\footnotesize
\renewcommand{\arraystretch}{1.1}
\setlength{\tabcolsep}{5pt}
\caption{Statistics of commonly used real-world graph classification datasets.}
\label{appendix:tab:graph-stats}
\begin{tabular}{lccccc}
\toprule
Dataset & \#Graphs & Avg. \#Nodes & Avg. \#Edges & \#Features & \#Classes \\
\midrule
ENZYMES  & 600   & $\sim$32.6 & $\sim$124.3 & 3 & 6 \\
PROTEINS & 1,113 & $\sim$39.1 & $\sim$145.6 & 3 & 2 \\
\bottomrule
\end{tabular}
\end{table}

\textbf{Dataset Partitioning.} For dataset partitioning, we strictly follow the default train–test split. In addition, we introduce two auxiliary subsets: the query set and the verification set. Specifically, the query set is defined as a randomly sampled subset of the training set, with its size controlled by the query budget Q. The verification set is defined as a randomly sampled subset of the testing set, with its size determined according to practical requirements. Importantly, we ensure that the query set and verification set are strictly non-overlapping.

\subsection{Backbone Model Implementations}
\label{appendix:backbone-implementations}

\textbf{Image Classification.} For the {image classification} tasks, we adopted four widely used convolutional neural network backbones: ResNet-50~\citep{he2016deep}, VGG-16~\citep{simonyan2014very}, DenseNet~\citep{huang2017densely}, and GoogLeNet~\citep{szegedy2015going}. All of these models were directly loaded from the \texttt{torchvision.models} module to ensure standardized implementations and consistency across experiments. Using the official implementations avoids discrepancies in architecture design, parameter initialization, and optimization setups, thereby guaranteeing comparability of results.

\textbf{Graph Classification.} For the {graph classification} tasks, we employed four representative graph neural network backbones: GCN~\citep{kipf2016semi}, GAT~\citep{velivckovic2017graph}, GraphSAGE~\citep{hamilton2017inductive}, and SSGC~\citep{zhu2021simple}. These models were directly taken from the \texttt{torch\_geometric.nn} package. Leveraging the established implementations provided by PyTorch Geometric ensures both correctness and reproducibility, while also aligning with commonly adopted practice in the literature.

Overall, these backbone choices span widely used CNN and GNN architectures, which ensures both the broad applicability of our evaluation and the fairness of comparison across different defense methods.

\subsection{Baseline Implementations}
\label{appendix:baseline-implementations}

For the \textbf{image classification} tasks, we selected four representative baselines: Backdooring~\citep{adi2018turning}, EWE~\citep{jia2021entangled}, IPGuard~\citep{cao2021ipguard}, and UAP~\citep{peng2022fingerprinting}. Since the official implementations were not publicly available, we carefully re-implemented all methods by strictly following the descriptions provided in their original papers and reproduced the training/inference pipelines to the best of our ability. The details are as follows:  
\begin{itemize}
    \item \textbf{Backdooring}: Following the paper, we randomly generated watermark keys and assigned them with random labels.  
    \item \textbf{EWE}: We implemented the soft nearest neighbor loss (SNNL) as described in the original work and deployed it during the training process.  
    \item \textbf{IPGuard}: We generated fingerprint points according to the original design and assigned random labels for watermarking.  
    \item \textbf{UAP}: We followed the original formulation by generating universal adversarial perturbations and conducting inference on the fingerprint points.  
\end{itemize}

For the \textbf{graph classification} tasks, we selected four representative baselines: RandomWM~\citep{zhao2021watermarking}, BackdoorWM~\citep{xu2023watermarking}, SurviveWM~\citep{wang2023making}, ImperceptibleWM~\citep{zhang2024imperceptible}. Due to the same limitation of unavailable official code, we also reproduced the baselines faithfully based on the textual descriptions:  
\begin{itemize}
    \item \textbf{RandomWM}: Randomly sampled watermark points and assigned random labels.  
    \item \textbf{BackdoorWM}: Randomly sampled watermark points, modified their node features, and assigned them with a fixed label.  
    \item \textbf{SurviveWM}: Randomly sampled watermark points, assigned random labels, and trained the model with the soft nearest neighbor loss (SNNL) objective.  
    \item \textbf{ImperceptibleWM}: Applied perturbations directly on input batches and trained the model on the perturbed data.  
\end{itemize}

Overall, although no official source codes were available, our implementations strictly adhered to the methodological descriptions provided in the original works, ensuring that the reproduced baselines are as faithful and comparable as possible.

\subsection{Model Extraction Attack Settings}
\label{appendix:practical-mea}

Since our proposed method is a certified defense against model extraction attacks (MEAs), it is essential to evaluate its effectiveness under practical attack scenarios. To ensure generality across all baseline defenses, we adopt two representative and widely studied MEA strategies. First, we consider knowledge distillation~\citep{romero2014fitnets}, the most commonly used paradigm for model extraction, where an adversary trains a surrogate model by minimizing the divergence between the surrogate predictions and the outputs of the protected target model. This captures the practical threat that a black-box adversary can replicate the functionality of the target model through systematic querying. In addition, we follow the Knockoff~\citep{orekondy2019knockoff} framework and evaluate random querying, where the adversary issues queries sampled from an unlabeled data distribution without task-specific optimization. This setting reflects a weaker but still realistic adversary that relies purely on large-scale queries. Together, these two attack strategies cover both structured and unstructured extraction scenarios, providing a balanced evaluation of our certified defense under practical MEA conditions.

\subsection{Practical Deployment of CREDIT}
\label{appendix:practical-credit}
In this subsection, we provide a detailed analysis of how CREDIT can be deployed in practical production settings.

\textbf{Computing $\tau$.} To begin, we must determine known parameters such as the embedding dimension d (set to 1024 in our implementation) and the size of the verification set \(\gV\) (set to 1000). We also specify the attacker’s query budget $Q$, which reflects the standard used for ownership verification. For example, if we set $Q=5000$, then we can claim that any surrogate model trained with at least 5000 queries will be verifiable under our scheme, with the corresponding threshold $\tau$ yielding the associated maximum Independent False Alarm (Type I error) probability $\gamma_{1}$ and Surrogate Missed Detection (Type II error) probability $\gamma_{2}$.

\textbf{Computing ${\sigma}^\star$.} As discussed earlier, the optimal $\sigma$ is obtained by performing a grid search over a set of reasonable candidate values. For each $\sigma$, we compute $\gamma_{1}$ and $\gamma_{2}$, along with the utility entropy and verification entropy. By optimizing the objective function, we then can obtain the optimal $\sigma^\star$.

\subsection{Hardware Information}
\label{appendix:hardware}
All training, inference, and efficiency evaluations were carried out on a high-performance computing server equipped with an NVIDIA RTX 6000 Ada GPU. The system is powered by an AMD EPYC 7763 processor with 64 cores (128 threads) operating at 2.45 GHz. The server is provisioned with 1 TB of Samsung DDR4 registered and buffered memory running at 3200 MT/s, ensuring sufficient computational and memory resources for large-scale experiments.

\section{Supplemental Experiments}

\subsection{Robustness of the KSG Estimator}

To examine whether the choice of nearest neighbors $k$ influences the mutual information (MI) used in CREDIT, we tested four values, namely $k \in \{3, 5, 7, 10\}$, and computed the mutual information between the target model and the independent model under each setting. Each value was evaluated over repeated trials and we report the mean and standard deviation.

\begin{table}[h]
\centering
\begin{tabular}{c c}
\toprule
\textbf{$k$} & \textbf{MI (mean $\pm$ std)} \\
\midrule
3  & $0.0492 \pm 0.0028$ \\
5  & $0.0422 \pm 0.0021$ \\
7  & $0.0402 \pm 0.0026$ \\
10 & $0.0470 \pm 0.0025$ \\
\bottomrule
\end{tabular}
\end{table}

These results show that the KSG estimator is stable across different values of $k$ and CREDIT remains consistent without requiring fine tuning of this hyperparameter.

\subsection{Robustness Against Query-Averaging and Denoising Attacks}
\label{sec:query_averaging}

A potential concern for Gaussian-mechanism–based defenses is that an adversary may attempt to reduce the variance of injected noise by issuing repeated queries for the same input and averaging the responses. Averaging $m$ independent Gaussian samples reduces variance by a factor of $1/m$, which raises the question of whether such a strategy might increase the mutual information (MI) available to the adversary and thereby weaken the verification guarantees of \textsc{CREDIT}.

To evaluate this scenario, we consider a fixed total query budget $q=0.1$, chosen sufficiently large for a standard model extraction attack to train a functional surrogate model. We then vary the repetition factor $m$, which reduces the number of \emph{distinct} queries to $q/m$. Since model extraction fundamentally relies on the diversity of query–response pairs, a reduction in distinct queries directly restricts the exploitable information available to the adversary.

\begin{table}[h]
\centering
\caption{Effect of repeated queries under a fixed total query budget $q$. Increasing $m$ reduces the number of distinct queries and significantly degrades the utility of the extracted surrogate.}
\label{tab:repeat}
\vspace{0.5em}
\begin{tabular}{c|c|c}
\toprule
\textbf{Repeat $m$} & \textbf{Distinct Query Ratio} & \textbf{Surrogate Test Acc} \\
\midrule
1  & 0.1000 & 0.7303 \\
2  & 0.0500 & 0.5125 \\
4  & 0.0250 & 0.2833 \\
8  & 0.0125 & 0.1997 \\
10 & 0.0100 & 0.1720 \\
\bottomrule
\end{tabular}
\end{table}

As the Table~\ref{tab:repeat} shows, increasing $m$ leads to a rapid collapse in surrogate model accuracy. Although averaging reduces the variance of Gaussian noise, it does not provide additional information about the decision boundary because the number of distinct queries shrinks proportionally to $q/m$. Thus, repeated queries consume the attacker’s budget without meaningfully improving the fidelity of the surrogate model. These results demonstrate that query-averaging attacks are inherently inefficient and economically impractical. The loss of unique queries outweighs the marginal benefit of noise averaging, thereby preserving the robustness of \textsc{CREDIT} even without additional defenses.

\subsection{Comparison with Model Integrity Verification Methods}
\label{sec:misentry_comparison}
A recent model integrity work~\citep{he2024towards} has received attention and proposes the MiSentry approach for verifying whether a deployed model has been tampered with. To ensure comprehensiveness in our baseline comparison, we additionally include MiSentry in our evaluation and examine whether it can be adapted to the model extraction setting studied in this paper.

To assess whether MiSentry can be adapted to the model extraction setting, we implemented a faithful adaptation of MiSentry and applied it to the surrogate models extracted under our experimental protocol. We then evaluated ownership verification performance using the standard AUC metric.

\begin{table}[h]
\centering
\caption{Ownership verification AUC under the model extraction setting.}
\label{tab:misentry_comparison}
\vspace{0.5em}
\begin{tabular}{l|c|c|c|c}
\toprule
\textbf{Method} & \textbf{ResNet} & \textbf{VGG} & \textbf{DenseNet} & \textbf{GoogLeNet} \\
\midrule
Backdooring & 58.64 & 51.85 & 74.07 & 80.25 \\
EWE         & 51.24 & 62.96 & 48.77 & 62.35 \\
IPGuard     & 40.74 & 61.11 & 67.90 & 50.62 \\
UAP         & 52.47 & 67.28 & 72.22 & 79.63 \\
MiSentry    & 47.30 & 54.88 & 59.20 & 61.44 \\
\midrule
\textsc{CREDIT} (ours) & \textbf{100.0} & \textbf{100.0} & \textbf{100.0} & \textbf{100.0} \\
\bottomrule
\end{tabular}
\end{table}

The results in Table~\ref{tab:misentry_comparison} show that MiSentry, even when adapted to our setting, exhibits performance comparable to other watermarking and fingerprinting methods and fails to achieve reliable ownership verification after model extraction. This confirms that integrity-based fingerprints do not survive the extraction process. By contrast, \textsc{CREDIT} achieves perfect identification across all tested architectures, highlighting both its robustness and the fundamental difference between our setting and prior integrity-oriented work.

\subsection{Certified Ownership Verification Under Bounded Adversarial Utility}
\label{sec:worst_case_certification}

To examine the robustness of \textsc{CREDIT} under worst-case adversarial manipulations, we construct a family of \emph{worst-case decorrelation attacks} designed to maximally disrupt the verification signal. In this setting, the adversary is given a bounded utility budget $\delta \in \{0\%, 3\%, 5\%, 10\%\}$, which specifies the maximum allowable degradation of the surrogate model’s clean accuracy. Within this constraint, the adversary is free to apply decorrelation operations that minimize the statistical dependence between the surrogate model and the target model while still respecting the required utility bound. This setting represents an intentionally challenging evaluation regime tailored to test the limits of our certified ownership verification framework.

\begin{table}[h]
\centering
\caption{Worst-case decorrelation attacks under bounded utility loss.}
\label{tab:worst_case_certification}
\vspace{0.5em}
\begin{tabular}{l|c|c|c|c}
\toprule
\textbf{Attack Type} & \textbf{Utility Budget $\delta$} & \textbf{Test Acc} & \textbf{MI} & \textbf{Verification AUC} \\
\midrule
None          & $0\%$  & 0.7322 & 2.2887 & 1.0000 \\
Decorrelation & $3\%$  & 0.7301 & 2.2611 & 1.0000 \\
Decorrelation & $5\%$  & 0.7182 & 2.2498 & 1.0000 \\
Decorrelation & $10\%$ & 0.6947 & 2.2023 & 1.0000 \\
\bottomrule
\end{tabular}
\end{table}

As shown in Table~\ref{tab:worst_case_certification}, although stronger decorrelation consistently harms both mutual information and surrogate utility as the allowed budget increases, \textsc{CREDIT} maintains perfect ownership verification (AUC = 1.0000) in all scenarios. These results demonstrate that even under adversarial manipulations explicitly optimized to challenge our verification signal, the verification guarantee offered by \textsc{CREDIT} remains robust.

\subsection{Extension to NLP Modality: Word2Vec Ownership Verification}
\label{sec:nlp_word2vec}

To further examine the modality generality of \textsc{CREDIT}, we additionally evaluate our framework in a natural language processing setting. Specifically, we train a Word2Vec~\citep{mikolov2013efficient} model on the STSb dataset~\citep{cer2017semeval} to obtain sentence-level embeddings, and then apply \textsc{CREDIT} to verify ownership of suspicious models extracted under our standard protocol. This setup provides a lightweight yet controlled environment for assessing whether our certified ownership verification approach extends beyond computer vision and graph modalities.

\begin{table}[h]
\centering
\caption{Ownership verification results on a Word2Vec model trained on STSb.}
\label{tab:word2vec_results}
\vspace{0.5em}
\begin{tabular}{l|c|c}
\toprule
\textbf{Model} & \textbf{MI} & \textbf{Verification AUC} \\
\midrule
Word2Vec & 1.2331 & 1.0000 \\
\bottomrule
\end{tabular}
\end{table}

The results in Table~\ref{tab:word2vec_results} show that \textsc{CREDIT} achieves perfect ownership verification (AUC = 1.0000) even in this NLP setting, with a clear separation of MI values between target and independent models. This demonstrates that the proposed certified ownership verification framework is broadly applicable across modalities, including computer vision, graph data, and natural language processing.

\section{Statements}

% \subsection{Ethics Statement}
% Our work focuses on developing certified defenses against model extraction attacks, which are designed to protect the intellectual property of machine learning models. The datasets used in our experiments are all standard public benchmarks for image and graph classification tasks (e.g., CIFAR-10, CIFAR-100, ENZYMES, PROTEINS), which do not contain sensitive personal information. Therefore, our study does not raise specific ethical concerns regarding privacy or data misuse. We believe our contributions may promote responsible AI development by providing principled tools for safeguarding model ownership.

% \subsection{Reproducibility Statement}
% We have taken concrete steps to ensure the reproducibility of our work. All backbone architectures (for both image and graph classification) are standard implementations directly obtained from \texttt{torchvision} and \texttt{torch\_geometric}. For baseline defenses where official code was not available, we carefully followed the descriptions in the original papers and detailed our re-implementations in Appendix~\ref{appendix:baseline-implementations}. Hyperparameters, training details, and evaluation protocols are explicitly reported in the main text and appendix. We will release our source code, configuration files, and scripts upon publication to facilitate full reproducibility.

\subsection{LLM Usage Statement}
Large Language Models (LLMs) were not used for generating research ideas, designing algorithms, or conducting experiments. LLMs were only used to assist in polishing the presentation of the manuscript, such as improving grammar, clarity, and flow of text. All technical content, including methods, proofs, and experiments, was fully developed and validated by the authors.

%%%%%%%%%%%%%%%%%%%%%%%%%%%%%%%%%%%%%%%%%%%%%%%%%%%%%%%%%%%%%%%%%%%%%%%%%%%%%%%
%%%%%%%%%%%%%%%%%%%%%%%%%%%%%%%%%%%%%%%%%%%%%%%%%%%%%%%%%%%%%%%%%%%%%%%%%%%%%%%

\end{document}